\documentclass[11pt,letterpaper]{mystyle}
\usepackage{fancyhdr}

\usepackage{tikz}
\usepackage[utf8]{inputenc} 
\usepackage[T1]{fontenc}
\usepackage[numbers]{natbib}
\usepackage{adjustbox}
\usepackage{breakurl}    
\usepackage{hyperref}
\usepackage[edges]{forest}
\usepackage{subcaption}
\usepackage{soul}
\usepackage{multirow}
\usepackage[utf8]{inputenc} 
\usepackage{booktabs}  
\usepackage{amsfonts}       
\usepackage{nicefrac}      
\usepackage{microtype}     
\usepackage{xcolor}        
\usepackage{colortbl}
\usepackage{enumitem}
\usepackage[numbers]{natbib}
\usepackage{amssymb}
\usepackage{wrapfig}
\usepackage{courier}
\usepackage{CJKutf8}
\usepackage{ragged2e}
\usepackage{longtable}
\usepackage[useregional]{datetime2}
\usepackage{threeparttable}
\definecolor{tamuMaroon}{HTML}{500000}
\colorlet{tamuGrayMaroon}{tamuMaroon!15!black}
\newcommand{\github}{\raisebox{-1.5pt}{\includegraphics[height=1.05em]{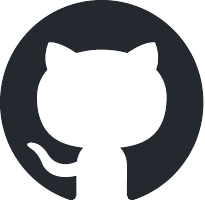}}}
\renewcommand{\today}{\DTMtoday}

\fancypagestyle{headstyle}{
    \fancyhead[C]{
        When Do Multi-Agent Systems Help? An Information Bottleneck Perspective
    }
}

\definecolor{hidden-red}{RGB}{205, 44, 36}
\definecolor{hidden-blue}{RGB}{194,232,247}
\definecolor{hidden-orange}{RGB}{243,202,120}
\definecolor{hidden-green}{RGB}{34,139,34}
\definecolor{hidden-pink}{RGB}{255,245,247}
\definecolor{hidden-black}{RGB}{20,68,106}
\definecolor{purple}{RGB}{144,153,196}
\definecolor{yellow}{RGB}{255,228,123}
\definecolor{hidden-yellow}{RGB}{255,248,203}
\definecolor{tkcolor}{RGB}{224,223,255}
\definecolor{darkblue}{rgb}{0, 0.40, 0.75}

\definecolor{tamuMaroon}{HTML}{500000}
\colorlet{abstractTextColor}{tamuMaroon!15!black}
\newcommand{\abstractstyle}{\color{abstractTextColor}}

\makeatletter
\let\oldabstract\abstract
\let\oldendabstract\endabstract
\renewcommand{\abstract}{\oldabstract\abstractstyle}
\renewcommand{\endabstract}{\oldendabstract}
\makeatother

\hypersetup{colorlinks=true, citecolor=darkblue, linkcolor=darkblue, urlcolor=darkblue}
\tcbset{
  aibox/.style={
    width=\linewidth,
    top=8pt,
    bottom=4pt,
    colback=blue!6!white,
    colframe=black,
    colbacktitle=black,
    enhanced,
    center,
    attach boxed title to top left={yshift=-0.1in,xshift=0.15in},
    boxed title style={boxrule=0pt,colframe=white,},
  }
}

\newtcolorbox{TakeawayBox}[2][]{takeawaybox,title=#2,#1}

\usepackage[utf8]{inputenc} % allow utf-8 input
\usepackage[T1]{fontenc}    % use 8-bit T1 fonts
\usepackage{hyperref}       % hyperlinks
\usepackage{url}            % simple URL typesetting
\usepackage{booktabs}       % professional-quality tables
\usepackage{amsfonts}       % blackboard math symbols
\usepackage{nicefrac}       % compact symbols for 1/2, etc.
\usepackage{microtype}      % microtypography
\usepackage{xcolor}         % colors
\usepackage{float}
\usepackage{amsmath}
\usepackage{amsmath,amssymb,amsthm}
\usepackage{enumitem}
\usepackage{float}
\usepackage{dblfloatfix}
\usepackage{booktabs}
\usepackage{graphicx} 
\usepackage{placeins}
\usepackage[table]{xcolor}
\usepackage{amsmath, amssymb}
\usepackage{booktabs}
\usepackage[table]{xcolor}
\usepackage{xcolor}
\usepackage{booktabs}
\definecolor{gainPosSoft}{RGB}{232,244,235}  % soft green
\definecolor{gainNegSoft}{RGB}{249,232,229}  % soft red

\definecolor{planLower}{RGB}{235,243,250}
\definecolor{planHigher}{RGB}{252,239,224}

\definecolor{relayGreen}{RGB}{82,158,113}
\definecolor{relayOrange}{RGB}{210,137,58}
\definecolor{relayRed}{RGB}{200,96,86}

\newcommand{\gainpos}[1]{\cellcolor{gainPosSoft}#1}
\newcommand{\gainneg}[1]{\cellcolor{gainNegSoft}#1}

\newcommand{\planhigher}[1]{\cellcolor{planHigher}#1}
\newcommand{\planlower}[1]{\cellcolor{planLower}#1}

\usepackage{array}
\usepackage{cleveref}  
\usepackage{bm}
\usepackage{mathtools}
\usepackage{wrapfig}

\definecolor{gainGreen}{HTML}{2E8B57}
\definecolor{lossRed}{HTML}{D65F5F}

\tcbuselibrary{most}  % most bundle covers skins, breakable, theorems, listings, etc.

\newtcbtheorem[number within=section]{propositionbox}{Proposition}%
{
  colback=gray!8,
  colframe=gray!45,
  fonttitle=\bfseries,
  coltitle=black,
  boxrule=0.5pt,
  arc=2pt,
  left=6pt,
  right=6pt,
  top=5pt,
  bottom=5pt
}{prop}

\newtcbtheorem[number within=section]{theorembox}{Theorem}%
{
  colback=gray!8,
  colframe=gray!45,
  fonttitle=\bfseries,
  coltitle=black,
  boxrule=0.5pt,
  arc=2pt,
  left=6pt,
  right=6pt,
  top=5pt,
  bottom=5pt
}{theorem}

\tcbset{
  ibinsight/.style={
    colback=black!2,   
    colframe=black!60,
    boxrule=0.4pt,
    arc=1mm,
    left=1.2mm, right=1.2mm, top=0.8mm, bottom=0.8mm,
    enhanced,
  }
}

\newtcbtheorem[number within=section]{definitionbox}{Definition}%
{
  colback=gray!8,
  colframe=gray!45,
  fonttitle=\bfseries,
  coltitle=black,
  boxrule=0.5pt,
  arc=2pt,
  left=6pt,
  right=6pt,
  top=5pt,
  bottom=5pt
}{def}

\usepackage{listings}
\usepackage{xcolor}

\definecolor{promptbg}{RGB}{248,248,248}
\definecolor{promptborder}{RGB}{185,185,185}

\usepackage{xcolor}
\usepackage{listings}
\tcbuselibrary{listings,breakable,skins,theorems}

\definecolor{promptbg}{RGB}{248,248,248}
\definecolor{promptborder}{RGB}{130,130,130}

\lstdefinestyle{promptstyle}{
  basicstyle=\ttfamily\small,
  breaklines=true,
  columns=fullflexible,
  keepspaces=true,
  showstringspaces=false
}

\lstdefinestyle{promptstyle}{
    basicstyle=\ttfamily\scriptsize,
    breaklines=true,
    breakatwhitespace=false,
    columns=fullflexible,
    keepspaces=true,
    showstringspaces=false,
    frame=none,
    xleftmargin=0pt,
    xrightmargin=0pt,
    literate={—}{{--}}2 {→}{{$\rightarrow$}}1 {–}{{--}}2 {“}{{``}}1 {”}{{''}}1 {‘}{{`}}1 {’}{{'}}1 {…}{{...}}3
}

\newtcblisting{promptbox}[1][]{
    enhanced,
    breakable,
    width=\linewidth,
    colback=promptbg,
    colframe=promptborder,
    boxrule=0.45pt,
    arc=1.5pt,
    left=6pt,
    right=6pt,
    top=5pt,
    bottom=5pt,
    boxsep=0pt,
    listing only,
    listing options={
        style=promptstyle,
        linewidth=\linewidth
    },
    title=#1,
    fonttitle=\bfseries\small,
    coltitle=black,
    attach boxed title to top left={xshift=6pt,yshift=-2pt},
    boxed title style={
        colback=white,
        colframe=promptborder,
        boxrule=0.4pt,
        arc=1pt,
        left=4pt,
        right=4pt,
        top=1pt,
        bottom=1pt
    }
}

\theoremstyle{definition}

\theoremstyle{plain}

\theoremstyle{remark}

\title{When Do Multi-Agent Systems Help? \\ An Information Bottleneck Perspective}

\author{
  Wendi Yu$^{1*}$, 
  Lianhao Zhou$^{1*}$, 
  Xiangjue Dong$^{1}$, 
  Sai Sudarshan Barath$^{1}$, 
  Declan Staunton$^{1}$, 
  Byung-Jun Yoon$^{2,3}$, 
  Xiaoning Qian$^{1,2,3}$, 
  James Caverlee$^{1}$, 
  Shuiwang Ji$^{1}$
\\

\vspace{1mm}

\normalfont{
$^1$ Department of Computer Science \& Engineering, Texas A\&M University\vspace{-5pt} \\
$^2$ Department of Electrical \& Computer Engineering, Texas A\&M University\vspace{-5pt} \\
$^3$ Computing and Data Sciences Directorate, Brookhaven National Laboratory\vspace{-5pt} \\
}}

\begin{document}

\begin{abstract}

  % \vspace{1mm}
  \textbf{\large Abstract:}
  \vspace{1mm}

LLM powered multi-agent systems (MAS) have emerged as a promising paradigm for complex tasks. However, their advantages over single-agent systems (SAS) remain unclear, with performance varying inconsistently across settings. Here, we provide an information bottleneck perspective on elucidating the differences between MAS and SAS. Specifically, our key observation is that a SAS accumulates its full reasoning trace in one shared context, while a MAS uses isolated local contexts connected by bounded relay messages.
We show that, under infinite relay bandwidth, any SAS can be simulated by a MAS that transmits the full upstream context. Thus, the nontrivial advantage of MAS arises under bounded relays, where compression introduces a fundamental trade-off: reducing redundant context can improve efficiency, but may also incur loss of task-relevant information. We formalize this trade-off as an information bottleneck controlled by an effective parameter $\beta$, which captures how the balance shifts with model capability, and shows that MAS gains arise when context reduction outweighs relay information loss.
We conduct 18 controlled experiments across five benchmarks and three model scales to validate our theoretical studies. We observe that MAS consistently helps when relays are near-sufficient, especially for weaker models. In contrast, MAS gains shrink or reverse when relays incur information loss, especially for stronger models that can already extract useful information from redundant context and thus gain little from compression. Our study shows that multi-agent design is fundamentally an information-bottleneck optimization problem. This perspective explains when bounded inter-agent communication helps or hurts.

  \vspace{5mm}

  % $^{*}$ \textit{Equal Contribution}
  
  % $^{\coloremojicode{2709}}$ \textit{Corresponding Author}

  \vspace{1mm}
  \textbf{Keywords}: Language Agents, Agentic AI, Multi-Agent Systems, Information Bottleneck
  \vspace{6mm}

   \textbf {* These authors contributed equally}
   
   % \textbf {$\dag$ These authors jointly led the project}

  \vspace{3mm}

  \github{} \textbf{Github Repository}: 
  \url{https://github.com/divelab/MAS-SAS}

  % \coloremojicode{1F4E7} \textbf{Correspondence}: \href{}{siqisun@fudan.edu.cn}

    % \coloremojicode{1F4E7} \textbf{Contact}: \href{}{ \textbraceleft weijiaqi, yangyuejin\textbraceright@pjlab.org.cn}

    \vspace*{0.2in}

\end{abstract}

\maketitle

% \vspace{3mm}
\pagestyle{headstyle}
\thispagestyle{empty}

\newpage
\vspace{2em}
\tableofcontents

\newpage

\section{Introduction}

Multi-agent systems (MAS) based on large language models (LLMs) have emerged as a promising paradigm for complex tasks~\cite{chen2024survey, guo2024large, tran2025multi}. By decomposing tasks into subtasks and assigning them to specialized agents, MAS can support role specialization, modular reasoning, and structured collaboration~\citep{chen2023gamegpt, wang2025tdag,hong2024metagpt, li2023camel}. MAS has shown empirical benefits across domains such as software engineering, mathematical reasoning, scientific discovery~\cite{zhou2025autonomous}, and planning~\cite{hong2024metagpt, zhou2025toward, choi2025atlas, du2024improving}.
However, the advantage of MAS over single-agent systems (SAS) remains unclear, with performance varying inconsistently across settings. Recent studies show that MAS gains diminish as the base model capability improves~\citep{huang2025single,kim2025towards}, or gains are attributable to increased compute rather than architectural advantages~\citep{kapoor2025ai, han2025single,li2026single}. These results suggest that MAS is not inherently superior, but helpful only under certain conditions. The existing comparative studies are mostly empirical, without explaining the underlying reason that makes MAS beneficial in some settings and harmful in others. This raises a fundamental question: \emph{what structural mechanism governs when MAS help over SAS?}

\begin{wrapfigure}{r}{0.4\textwidth}
    % \vspace{-0.8em}
    \centering
    \includegraphics[width=0.95\linewidth]{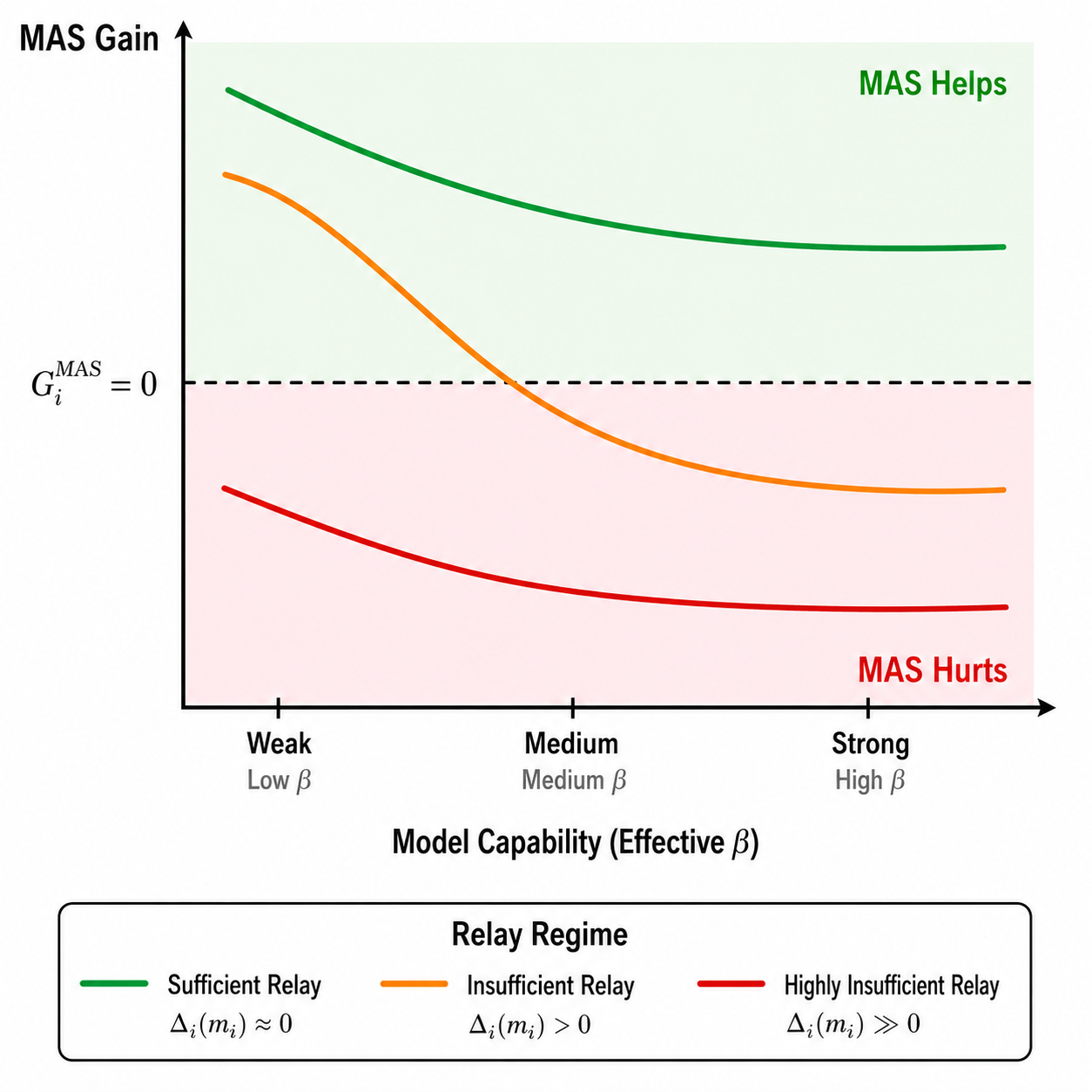}
    \vspace{-0.6em}
    \caption{Predicted MAS gain by relay regime. MAS helps when context reduction outweighs capability-weighted relay loss; stronger models gain less from compression and lose more from information loss.}
    \label{fig:relay-regimes}
    \vspace{-0.6em}
\end{wrapfigure}

Here, we propose an information flow perspective to study the essential difference between SAS and MAS, which may not lie simply in the number of agents, but in how information is organized and transmitted~\cite{shen2025understanding, zhu2025multiagentbench,ao2026reliability}. A SAS maintains a single shared context in which all subtasks and intermediate reasoning traces are accumulated. In contrast, a MAS uses isolated local contexts connected by relay messages. Each relay acts as a compressed interface: it can reduce downstream interference by suppressing noisy contexts, which may cause information loss that later workers cannot recover. These competing effects are naturally captured by the \emph{information bottleneck} principle \citep{tishby1999information, alemi2017deep, xu2021information}. From this perspective, MAS design can be viewed as an optimization problem over relay compression: how can upstream-irrelevant context be removed while preserving information relevant for downstream tasks?

% We show that MAS and SAS are equivalent under unrestricted relay communication. A MAS can simulate a SAS by passing the full upstream context, so the nontrivial MAS effect arises from bounded relay compression rather than decomposition alone. As illustrated in Figure~\ref{fig:intro-overview} (b), we model this compression as an \textit{information bottleneck~(IB)} with an effective parameter $\beta$ that captures downstream LLM-model capability (details in Section~\ref{sec:ib}). This yields a stage-wise MAS gain: MAS gains from upstream context reduction, but pays a cost for relay information loss. Formally, MAS is beneficial when the context-reduction benefit exceeds the capability-weighted relay-loss cost. As $\beta$ increases with base LLM-model capability, stronger models become less helped by compression and more sensitive to information loss. Figure~\ref{fig:relay-regimes} summarizes this predicted trade-off across relay regimes.
We show that MAS and SAS are equivalent under unrestricted relay communication. A MAS can simulate a SAS by passing the full upstream context, so the nontrivial MAS effect arises from bounded relay compression rather than decomposition alone. Figure~\ref{fig:relay-regimes} summarizes the high-level prediction of our framework: MAS helps when context reduction dominates relay information loss, but its gain shrinks or reverses when relays discard downstream-relevant information. To explain this trade-off, we model each relay as an \textit{information bottleneck~(IB)}, as illustrated in Figure~\ref{fig:intro-overview}(b), with an effective parameter $\beta$ that captures downstream LLM capability (details in Section~\ref{sec:ib}). This yields a stage-wise MAS gain: MAS is beneficial when the context-reduction benefit exceeds the capability-weighted relay-loss cost. As $\beta$ increases with base LLM capability, stronger models become less helped by compression and more sensitive to information loss.

We validate our theoretical investigations with 18 empirical studies, spanning five agentic benchmarks, including ALFWorld~\citep{shridhar2020alfworld}, WebShop~\cite{yao2022webshop}, WorkBench~\cite{styles2024workbench}, WideSearch~\cite{wong2025widesearch}, and TravelPlanner~\cite{xie2024travelplanner}, and three model scales, including Qwen2.5-7B, GPT-4o-mini, Qwen3.5-27B. As shown in Figure~\ref{fig:intro-overview} (a), we carefully design three settings to separate the effect of decomposition from relay-based context isolation. Across five benchmarks and three model scales, MAS consistently improves over SAS-contextflow when relays are near-sufficient, while its gains diminish or reverse when relays lose downstream-relevant information, especially for stronger models.

Our contributions can be summarized as follows:
\begin{itemize}[leftmargin=*, topsep=0pt, noitemsep]
    \item \textbf{An information-bottleneck framework for MAS.} We formulate MAS design as relay compression between isolated worker contexts. We show that, with unbounded relay communication, any SAS can be simulated exactly by a MAS, showing that nontrivial MAS gains and losses arise from bounded relay compression.

    \item \textbf{A stage-wise MAS gain.} From the information-bottleneck objective, we derive a per-stage MAS gain showing that MAS helps when the benefit of upstream noisy context reduction outweighs the cost of relay information loss. The effective parameter $\beta$ captures downstream model capability, explaining why the same relay can benefit weaker models but hurt stronger ones.

    \item \textbf{Controlled empirical validation.}  We conduct 18 comparisons across five benchmarks and three model scales. 
    By comparing SAS, SAS-contextflow, and MAS, we isolate relay compression from subtask decomposition and show that observed MAS gains follow the predicted relay-loss regimes. Our framework suggests how to design a better MAS by optimizing relays to remove upstream-irrelevant context while preserving downstream-relevant information.
\end{itemize}

\begin{figure}[t]
    \centering
    \makebox[\textwidth][c]{%
        \includegraphics[width=1.0\textwidth]{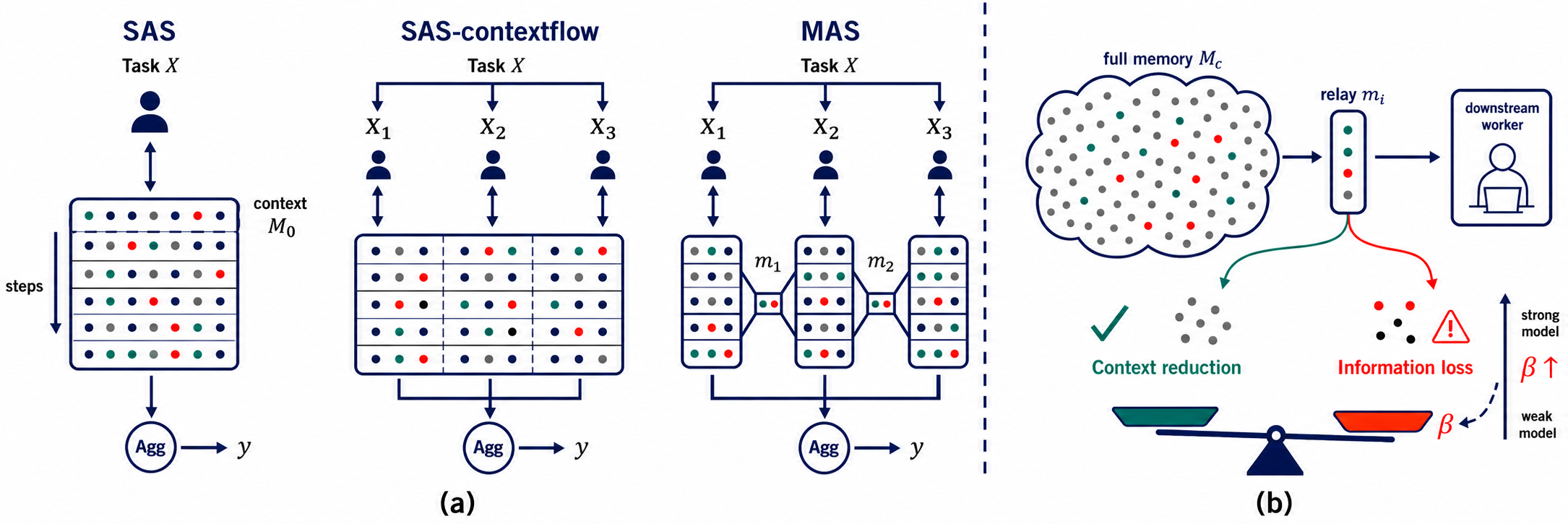}%
    }
    \vspace{-15pt}
    \caption{\textbf{Information-flow view of MAS vs. SAS.} \textbf{(a)} Three controlled prototypes for task $X$: \textit{SAS:} uses single shared context; \textit{SAS-contextflow} follows a planner-induced subtask sequence $X_1, X_2, X_3$ with shared context; \textit{MAS} uses the same decomposition but connects workers through compressed relays $m_1, m_2$. \textbf{(b)} Each relay compresses $M_i$ into $m_i$, trading off context reduction against information loss, with an effective $\beta$ that grows with model capability.}
    \label{fig:intro-overview}
\end{figure}

\section{Related Work}

\textbf{Multi-Agent Systems.} 
LLM-based MAS decomposes complex tasks across specialized agents that communicate in natural language. 
Early frameworks introduce role-playing, workflow execution, and conversational protocols~\citep{hong2024metagpt,li2023camel,wu2024autogen}, and follow-up work demonstrates gains on software development, reasoning, scientific discovery, and simulation~\citep{du2024improving,qian2024chatdev,liang2024encouraging,park2023generative}.
More recent studies examine how performance changes as the number of agents grows, reporting both improvements and non-monotonic trends~\citep{qian2025scaling,chen2024agentverse,li2024more,chen2024more,riedl2026emergent}.
Broader coverages are given in~\citep{guo2024large,zhang2025survey}.

\noindent\textbf{Empirical Studies of MAS versus SAS.}
Recent work consistently shows that the advantages of MAS depend on tasks and models, and that gains diminish as the base LLM become more powerful~\citep{huang2025single,kim2025towards,kapoor2025ai,li2026single}.
Along a similar line to our work, it has been shown in \citep{han2025single} that, under matched thinking-token budgets, a SAS can match or outperform MAS on multi-hop reasoning, and they attribute the gap to communication bottlenecks. 
This observation is consistent with our relay information loss term $\Delta_i(m_i)$, but it only describes one side of the MAS trade-off. 
Our framework additionally captures the benefit of upstream context reduction $H(M_i \mid m_i)$ and the capability-dependent weight $\beta$ that decides which term wins. 
We therefore move from empirical description to a mechanistic explanation of when MAS helps.

\noindent\textbf{Information Bottleneck.} 
The idea of information bottleneck (IB)~\citep{tishby1999information} formalizes the trade-off between compression and relevance preservation. It has been used to study deep representation learning~\citep{alemi2017deep,tishby2015deep} and relay channels in communication theory~\citep{xu2021information,steiner2020broadcast}. Our work represents an attempt to use it in LLM agent systems by modeling each inter-worker interface as a relay bottleneck and deriving a stage-wise decomposition of MAS gain that explains the inconsistent empirical findings discussed above.
% Extended discussion of MAS architectures and IB variants appears in Appendix~\ref{app:related-work}.

\section{Problem Setup and MAS-SAS Equivalence}
\label{sec:setup}

% \yuchao{Before going to subsection, I think it is better to specify why we give this definition, and it will end up using in what place (or the reason to formulate it). E.g., you can say "To ..., we formulate ... for ...". Also, do these definition also appear before in other papers? If so, it is better to connect with them for reviewers to get in}

% This section formalizes LLM-based agent systems and establishes the theoretical foundations for understanding decomposition.
% We first define what is agentic tasks and agent systems, then we introduce a xxx that

This section introduces a formal setup for task structure, agent systems, and inter-agent communication.
We model agentic tasks as partially observable sequential decision processes and define both single-agent systems and multi-agent systems using a common basic-agent abstraction. 
Then, we introduce the infinite-bandwidth MAS--SAS equivalence to show why compressed relay communication is the key source of MAS gains and losses.

%\subsection{Agentic Tasks}

\textbf{Agentic Tasks.} We formalize agentic tasks as partially observable sequential decision processes. 
An agentic task class is defined as
$\Gamma \triangleq \langle \mathcal{S}, \mathcal{A}, \mathcal{O}, P, \Omega, \mathcal{X} \rangle,$
where $\mathcal{S}$, $\mathcal{A}$, and $\mathcal{O}$ denote the state, action, and observation spaces, respectively,
$P : \mathcal{S} \times \mathcal{A} \to \Delta(\mathcal{S})$ is the transition kernel,
$\Omega : \mathcal{S} \times \mathcal{A} \to \Delta(\mathcal{O})$ is the observation kernel,
and $\mathcal{X}$ is the task-instance space.
% where $I$ is the initial instruction, $s_0\in\mathcal S$ is the initial state, and $U_X$ is the associated utility function. \wendi{may need to delete s0}

%\subsection{Agent Systems}

\textbf{Basic Agent.}
We define a basic agent as
$\mathsf{Agent} \triangleq \langle \mathcal{T}, \mathcal{M}, \pi_T, \pi_A, \mathrm{Agg} \rangle$,
where $\mathcal{T}$ is the thought space, $\mathcal{M}$ is the context space, $\pi_T$ and $\pi_A$ are the thought and action policies, and $\mathrm{Agg}$ is the output function. Here, context refers to the accumulated execution history visible to the agent, consisting of the thoughts, actions, and observations from previous steps. Given a specific task instance $X$ and initial observation $o_0$, the agent initializes
$M_0 = (X, o_0)$ and updates its context by
$
M_{t+1} = M_t \oplus (\tau_t, a_t, o_t),
$
where $\tau_t \sim \pi_T(\cdot \mid M_t)$ is the thought,
$a_t \sim \pi_A(\cdot \mid M_t, \tau_t)$ is the action, and $o_t$ is the resulting observation under $(P,\Omega)$. The agent terminates when $\pi_A$ emits a terminal action, producing context $M_T$ and output $\mathrm{Agg}(M_T)$.

\textbf{Single-Agent System (SAS).}
A SAS instantiates one basic agent on the task instance $X$, with initial context $M_0 = (X, o_0)$. It produces a single context $M_{\mathrm{SAS}}$ and final output $y_{\mathrm{SAS}} = \mathrm{Agg}(M_T)$ along a single context trajectory.

\textbf{Multi-Agent System (MAS).}
A MAS consists of a planner, a sequence of $n$ workers, and a system-level aggregator. The \textbf{planner} decomposes the instance \(X\) into an ordered sequence of
sub-instances \(X_1,\ldots,X_n\). Each sub-instance \(X_i\) induces a
worker-specific target variable \(Y_i\), which represents the task-relevant
outcome that worker \(i\) is expected to infer or
produce. We focus on instance-level decomposition, where the task kernels $(P, \Omega)$ remain fixed and only $X$ is partitioned across workers. Each \textbf{worker} is a basic agent operating on $X_i$. Inside a worker, context propagates as in a basic agent. Between workers, the sequence forms a chain that communicates through \textbf{relays}: worker $i$ produces a relay $m_i = \mathrm{Agg}_i(M_i)$ that is passed to worker $i+1$, with $m_0 = \varnothing$. Worker $i$ initializes its context as $M_0^{(i)} = (X_i, o_0^{(i)}) \oplus m_{i-1}$. The \textbf{system-level aggregator} then combines all terminal memories into the final output $y_{\mathrm{MAS}} = \mathrm{Agg}_{\mathrm{MAS}}(M_1, \dots, M_n)$.

%\subsection{Infinite-Bandwidth MAS-SAS Equivalence}
%\label{sec:equivalence}

\textbf{Infinite-Bandwidth MAS-SAS Equivalence.} A MAS differs from a SAS in two structural ways: a planner partitions the task $X$ into subtasks $X_1, \dots, X_n$, and workers communicate through the relays $m_1, \dots, m_{n-1}$. Let $B_{\mathrm{sys}}$ denote the \emph{inter-worker bandwidth}, defined as the maximum entropy any single relay $m_i$ can carry, so that $H(m_i) \leq B_{\mathrm{sys}}$ for all $i$. We now show that as this bandwidth grows without bound, any SAS can be simulated by a MAS. The proof is deferred to Appendix~\ref{app:equivalence-proof}.

\begin{propositionbox}{Infinite-Bandwidth MAS--SAS Equivalence}{equivalence}
Assume the SAS context trajectory has finite entropy at every step. Then as
$B_{\mathrm{sys}} \to \infty$, for any SAS there exists a MAS with
$y_{\mathrm{MAS}} = y_{\mathrm{SAS}}$.
\end{propositionbox}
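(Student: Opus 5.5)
The construction is a simulation argument: we build a MAS whose workers collectively replay the SAS trajectory step by step, and whose relays carry the entire upstream context. Fix a SAS with basic agent $\langle \mathcal{T},\mathcal{M},\pi_T,\pi_A,\mathrm{Agg}\rangle$ on instance $X$, producing the trajectory $M_0=(X,o_0),M_1,\dots,M_T$ and output $y_{\mathrm{SAS}}=\mathrm{Agg}(M_T)$.

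\textbf{Planner.} The planner uses the trivial instance-level decomposition. One option is $n=1$ with $X_1=X$, which already reproduces the SAS exactly with no relay. To make the statement non-vacuous for chains, we allow any $n$ with a partition of the time horizon into segments $0=t_0<t_1<\dots<t_n=T$. Worker $i$ receives $X_i=X$ together with a stage index, so that $(P,\Omega)$ are unchanged as the instance-level setup requires.

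\textbf{Workers.} Every worker uses the same policies $\pi_T,\pi_A$ as the SAS agent. Its relay function is the identity, $m_i=\mathrm{Agg}_i(M_i)=M^{\mathrm{SAS}}_{t_i}$, i.e., the full accumulated context. Worker $i+1$ initializes with $M^{(i+1)}_0=(X_{i+1},o^{(i+1)}_0)\oplus m_i$. We choose the initialization convention so that this context coincides with $M^{\mathrm{SAS}}_{t_i}$: the agent treats the relayed history as its prefix, and the redundant header $(X,o)$ is either absorbed or ignored by the policies. Worker $i+1$ then runs until step $t_{i+1}$, at which point its $\pi_A$ emits the terminal action; this is implementable because the stage index is part of $X_{i+1}$. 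The system aggregator is $\mathrm{Agg}_{\mathrm{MAS}}(M_1,\dots,M_n)=\mathrm{Agg}(M_n)$.

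\textbf{Induction.} We argue by induction on $i$ that the context of worker $i$ at its local step $s$ has the same conditional law as $M^{\mathrm{SAS}}_{t_{i-1}+s}$. This holds because the thought, action and observation kernels are identical and depend only on the current context. Under a coupling of the randomness, the two contexts are equal almost surely, so $M_n=M_T$ and therefore $y_{\mathrm{MAS}}=y_{\mathrm{SAS}}$.

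\textbf{Bandwidth.} The relays must satisfy $H(m_i)\le B_{\mathrm{sys}}$. Here $H(m_i)=H(M^{\mathrm{SAS}}_{t_i})$, which is finite by the hypothesis that the SAS context has finite entropy at every step. Since there are finitely many relays, $\max_i H(m_i)<\infty$. Hence for every $B_{\mathrm{sys}}$ at least this maximum, the construction is admissible, which is the meaning of the limit $B_{\mathrm{sys}}\to\infty$.

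The main obstacle is the initialization step. The MAS definition forces worker $i+1$ to start from $(X_{i+1},o^{(i+1)}_0)\oplus m_i$ rather than from $m_i$ alone, and it forces the worker to stop through its own terminal action. I would handle both issues at the level of the abstract agent, using the freedom in choosing the worker's policies and the sub-instances. Since $\pi_T$ and $\pi_A$ for the MAS workers are ours to specify, we can define them to act on the canonical projection that strips the redundant header and to emit termination at the prescribed step count. If one insists on literally identical policies, the fallback is the $n=1$ decomposition, which satisfies the statement trivially.
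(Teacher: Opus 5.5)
Your proposal is correct and uses essentially the same simulation argument as the paper: partition the SAS trajectory, make every relay the full upstream context via an identity $\mathrm{Agg}_i$, let the workers replay the SAS policies, show by induction that the last worker's context matches $M_T$ so that $\mathrm{Agg}_{\mathrm{MAS}}(M_1,\dots,M_n)=\mathrm{Agg}(M_n)$ returns $y_{\mathrm{SAS}}$, and obtain admissibility from $B_{\mathrm{sys}}\ge\max_i H(m_i)<\infty$. The differences are only bookkeeping: the paper takes $X_1=X$ and $X_i=\varnothing$ for $i\ge 2$, asserts that the unchanged policies reproduce the SAS updates from an information-equivalent context, and leaves worker termination implicit, whereas you explicitly make the worker policies act on the header-stripped context, stop at the prescribed step, and add a coupling for stochastic policies, which is slightly more careful than the paper's version.
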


In practice, however, $B_{\mathrm{sys}}$ is bounded. LLM agents communicate through token-limited channels far smaller than $H(M_i)$, forcing each relay $m_i$ to be a compressed summary of the upstream context. Proposition~\ref{prop:equivalence} therefore shows relay compression is the source of the empircal variability of MAS over SAS. Whether this compression helps or hurts depends on what the relay retains versus discards relative to the downstream subtask.

\section{MAS Relay Compression as an Information Bottleneck}
\label{sec:ib}

% \yuchao{The commented out textbf should be important for the reader to know the storyline}

% Section~\ref{sec:equivalence} located the structural source of MAS-SAS variability in how each relay compresses the upstream context. By Proposition~\ref{prop:equivalence}, a relay that transmits the upstream context in full exactly recovers SAS. MAS performance is therefore shaped largely by how the relay compression is designed at each interface. In this section we show that relay compression among MAS workers is actually an optimization over conditaional infromation bottleneck. Based on this, we can derive the MAS gain over SAS.

It has been established above that full-context relay communication allows MAS to recover SAS exactly. 
This raises a natural question: \textbf{what governs MAS performance when relays are bounded?} 
In this section, we show that relay design is an \textit{information-bottleneck} optimization problem. 
A relay helps by removing upstream noisy context, but hurts when it loses information needed by downstream workers. 
This trade-off leads to a MAS gain that characterizes when MAS improves over SAS.

\subsection{Relay Compression as an Information Bottleneck}
\label{sec:ib_explains}
Consider two consecutive workers $i$ and $i+1$ under the planner-induced decomposition defined in Section~\ref{sec:setup}. 
Worker $i$ has context $M_i$ and produces a relay message $m_i=\mathrm{Agg}_i(M_i)$. 
Worker $i+1$ receives $m_i$ together with its local subtask $X_{i+1}$, which includes its sub-instruction and initial observation. 
Let $Y_{i+1}$ denote the downstream target associated with this subtask. 
Since $m_i$ is compressed from $M_i$, it cannot increase the information about $Y_{i+1}$ beyond that contained in $M_i$.
Therefore, once $M_i$ and $X_{i+1}$ are given, $m_i$ provides no additional information about $Y_{i+1}$. A formal justification is provided in Appendix~\ref{app:markov}. This gives the Markov relation
\begin{equation}
\label{eq:markov-relation}
m_i \leftrightarrow M_i \leftrightarrow Y_{i+1} \mid X_{i+1}.
\end{equation}
\textbf{IB Objective in the MAS Setting.}
A relay should compress upstream context while preserving the information needed for the downstream subtask.
This trade-off can be formulated through the information bottleneck (IB) objective
\begin{equation}
\label{eq:ib-objective}
\min_{p(m_i \mid M_i)}
\;
I(M_i;m_i)
-
\beta I(m_i;Y_{i+1}\mid X_{i+1}),
\end{equation}
where \(I(\cdot;\cdot)\) denotes mutual information. The first term measures how much upstream information is retained in the relay \(m_i\), while the second term measures how much downstream-relevant information about \(Y_{i+1}\) is preserved given the local subtask \(X_{i+1}\). Although not optimized explicitly, this objective is implicitly induced by the MAS design, (e.g., prompts, communication, and model capability) and characterizes the optimal relay.

\textbf{Effective $\beta$ and LLM Capability.}
In the IB objective, $\beta>0$ controls the preference between compression and predictive information preservation. 
In MAS, we interpret $\beta$ as an effective parameter induced by downstream LLM capability. 
Weaker models are more sensitive to noisy context and thus favor stronger compression (smaller effective $\beta$), whereas stronger models can better utilize rich context and therefore place higher value on preserving predictive information (larger effective $\beta$). 
This could explain why the same relay may benefit weaker models but degrade performance for stronger ones.

\textbf{SAS as the No-Compression Relay.}
Proposition~\ref{prop:equivalence} shows that a MAS recovers SAS when relay communication is unbounded. In the IB formulation, this corresponds to the no-compression relay \(m_i=M_i\). Substituting \(m_i=M_i\) into Eq.~\eqref{eq:ib-objective} gives
\begin{equation}
\label{eq:sas-loss}
\mathcal{L}_i^{\mathrm{SAS}}(\beta)
\triangleq
\mathcal{L}_i^{\mathrm{MAS}}(m_i=M_i;\beta)
=
H(M_i)
-
\beta I(M_i;Y_{i+1}\mid X_{i+1}),
\end{equation}
Therefore, under the same planning decomposition, SAS corresponds to the no-compression point in the relay design space, while MAS corresponds to compressed relay designs.

\subsection{MAS Gain: Trade-off in Relay Compression}

The IB objective shows that a MAS relay must balance two effects: compressing upstream information and preserving downstream-relevant information. 
To isolate the second effect, we define the \textbf{relay information loss} as follows.

\begin{definitionbox}{Relay Information Loss}{relay-info-loss}
The relay information loss induced by $m_i$ is
\begin{equation}
\label{eq:relay-info-loss}
\Delta_i(m_i)
\triangleq
I(M_i;Y_{i+1}\mid X_{i+1})
-
I(m_i;Y_{i+1}\mid X_{i+1}).
\end{equation}
\end{definitionbox}
This quantity measures how much target-relevant information is lost when the full upstream context $M_i$ is compressed into the relay $m_i$. 
By the Markov relation in Eq.~\eqref{eq:markov-relation}, we have $\Delta_i(m_i)\ge 0$. 
This gives the following interpretation: When $\Delta_i(m_i)=0$, the relay is sufficient for the next worker, meaning that it preserves the information in $M_i$ needed to predict $Y_{i+1}$ given $X_{i+1}$; When $\Delta_i(m_i)>0$, the relay is insufficient because it omits information needed by the next worker, and larger values indicate more severe relay information loss.

Using the relay information loss, the MAS IB objective can be rewritten as 
\begin{equation}
\label{eq:mas_loss}
\mathcal{L}_i^{\mathrm{MAS}}(m_i;\beta)
=
I(M_i;m_i)
+
\beta \Delta_i(m_i)-
\beta I(M_i;Y_{i+1}\mid X_{i+1}).
\end{equation}
The last term \(I(M_i;Y_{i+1}\mid X_{i+1})\) is independent of the relay design.

\textbf{MAS Gain.}
Building on the MAS and SAS IB objective, we define the local MAS gain as the reduction in the relay objective relative to the no-compression SAS baseline:
\begin{equation}
\label{eq:mas-gain-def}
G_i^{\mathrm{MAS}}
\triangleq
\mathcal L_i^{\mathrm{SAS}}
-
\mathcal L_i^{\mathrm{MAS}}.
\end{equation}

% \begin{theorembox}{MAS Gain Decomposition}{mas-gain-decomposition}
% Given the no-compression SAS baseline $m_i=M_i$, the local MAS gain decomposes into a benefit term from upstream context reduction and a cost term from relay information loss:
% \begin{equation}
% \label{eq:mas-gain}
% \bm{G_i^{\mathrm{MAS}}}
% =
% \underbrace{\textcolor{gainGreen}{\bm{H(M_i \mid m_i)}}}_{\text{upstream context reduction}}
% -
% \underset{
% \substack{
% \uparrow\\[-0.5pt]
% \text{\scriptsize model capability}
% }
% }{\bm{\beta}}
% \underbrace{\textcolor{lossRed}{\bm{\Delta_i(m_i)}}}_{\text{relay information loss}}.
% \end{equation}
% \end{theorembox}

\begin{theorembox}{MAS Gain Decomposition}{mas-gain-decomposition}
Given the no-compression SAS baseline $m_i=M_i$, the local MAS gain decomposes into a benefit term from upstream context reduction and a cost term from relay information loss
\begin{equation}
\label{eq:mas-gain}
\bm{G_i^{\mathrm{MAS}}}
=
\underbrace{\bm{H(M_i \mid m_i)}}_{\text{upstream context reduction}}
-
\underset{
\substack{
\uparrow\\[-0.5pt]
\text{\scriptsize model capability}
}
}{\bm{\beta}}
\underbrace{\bm{\Delta_i(m_i)}}_{\text{relay information loss}}.
\end{equation}
\end{theorembox}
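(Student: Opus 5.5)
The plan is to compute $G_i^{\mathrm{MAS}} = \mathcal L_i^{\mathrm{SAS}} - \mathcal L_i^{\mathrm{MAS}}$ directly from Eq.~\eqref{eq:sas-loss} and the rewritten MAS objective in Eq.~\eqref{eq:mas_loss}. Both objectives contain the relay-independent term $-\beta I(M_i;Y_{i+1}\mid X_{i+1})$. This term cancels in the difference, which leaves
\begin{equation*}
G_i^{\mathrm{MAS}} = H(M_i) - I(M_i;m_i) - \beta\,\Delta_i(m_i).
\end{equation*}
So the only remaining work is to identify the first two terms with the context-reduction term.

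For that identification, I use the standard identity $I(M_i;m_i) = H(M_i) - H(M_i\mid m_i)$. This gives $H(M_i) - I(M_i;m_i) = H(M_i\mid m_i)$, and substituting yields Eq.~\eqref{eq:mas-gain}. Before that, I will check Eq.~\eqref{eq:mas_loss} itself. Definition~\ref{def:relay-info-loss} gives
\begin{equation*}
-\beta I(m_i;Y_{i+1}\mid X_{i+1}) = \beta\Delta_i(m_i) - \beta I(M_i;Y_{i+1}\mid X_{i+1}).
\end{equation*}
Plugging this into Eq.~\eqref{eq:ib-objective} recovers Eq.~\eqref{eq:mas_loss}, so no additional assumption is needed for that step.

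The main obstacle is consistency of the SAS baseline, not the algebra. In Eq.~\eqref{eq:sas-loss} the compression term is written as $H(M_i)$, which requires $I(M_i;M_i)=H(M_i)$. This equality needs $M_i$ to be discrete with finite entropy, which matches the finite-entropy hypothesis of Proposition~\ref{prop:equivalence}; I will state this explicitly.

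I will also make sure $\Delta_i(M_i)=0$ at the SAS point, so that the SAS objective really is Eq.~\eqref{eq:mas_loss} evaluated at $m_i=M_i$. This is immediate from Definition~\ref{def:relay-info-loss}.

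Finally, I will note the consequences for interpreting the sign of the gain. The Markov relation Eq.~\eqref{eq:markov-relation}, via the conditional data-processing inequality, gives $\Delta_i(m_i)\ge 0$. Also $H(M_i\mid m_i)\ge 0$. Hence the gain is positive exactly when the context-reduction term exceeds the $\beta$-weighted loss.
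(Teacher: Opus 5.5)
Your proposal is correct and follows the same route as the paper's derivation: subtract Eq.~\eqref{eq:mas_loss} from Eq.~\eqref{eq:sas-loss}, cancel the relay-independent term $\beta I(M_i;Y_{i+1}\mid X_{i+1})$, and use $I(M_i;m_i)=H(M_i)-H(M_i\mid m_i)$ to turn $H(M_i)-I(M_i;m_i)$ into $H(M_i\mid m_i)$. The paper leaves two points implicit that you check: that $M_i$ is discrete with finite entropy, so that $I(M_i;M_i)=H(M_i)$, and that $\Delta_i(M_i)=0$ at the SAS point.
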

% \yuchao{Maybe "upstream context reduction" -> "upstream context reduction"}
Theorem~\ref{theorem:mas-gain-decomposition} shows that MAS gain under the same planning decomposition is governed by a trade-off between upstream context reduction and relay information loss. 
The first term, \(H(M_i\mid m_i)\), measures the amount of upstream information removed by relay compression, while the second term, \(\beta \Delta_i(m_i)\), measures the corresponding loss of downstream-relevant information. 
Therefore, MAS is locally beneficial exactly when
\begin{equation}
\label{eq:mas-advantage-condition}
G_i^{\mathrm{MAS}} > 0
\quad \Longleftrightarrow \quad
H(M_i \mid m_i) > \beta \Delta_i(m_i).
\end{equation}
This trade-off is capability-dependent: weaker models benefit more from reducing redundant context, whereas stronger models can better utilize redundant upstream information, reducing the relative benefit of relay compression. Overall, this decomposition explains the structural mechanism behind when and why MAS outperforms SAS.

\paragraph{Local-to-Downstream Sufficiency.}
The gain decomposition above is local to a single relay interface. In a sequential MAS, local sufficiency for \(Y_{i+1}\) does not automatically imply sufficiency for later targets. It holds for a later target \(Y_j\) only when all information from \(M_i\) that is needed for \(Y_j\) has already been captured by the next target \(Y_{i+1}\). In that case, a locally sufficient relay \(m_i^*\) also satisfies
\begin{equation}
I(Y_j;M_i\mid m_i^*,X_{i+1},X_j)=0,\quad j>i+1.
\end{equation}
Otherwise, if \(Y_j\) still depends on details in \(M_i\) that are not reflected in \(Y_{i+1}\), hidden long-range dependencies may make a locally sufficient relay insufficient for later workers. See Appendix~\ref{app:local-to-downstream} for details.

\section{Experiments}
% \xj{todo: add experimental details}
% \subsection{Empirical Validation of Theorem 4.3}
\label{sec:exp-analysis}

\subsection{Experimental Setup}

% To validate the stage-wise MAS gain across tasks and model scales, we compare three controlled prototypes on five benchmarks under three model capability levels.

\textbf{Benchmarks.}
We validate our framework on five agentic benchmarks that cover diverse task structures and relay-complexity regimes:
ALFWorld~\cite{shridhar2020alfworld}, WebShop~\cite{yao2022webshop}, WorkBench~\citep{styles2024workbench}, WideSearch~\citep{wong2025widesearch}, and TravelPlanner~\cite{xie2024travelplanner}.
For empirical analysis, we use $\delta$ to denote the estimated relay complexity of each benchmark, serving as a proxy for how difficult it is for a relay to be sufficient.
For a relay from worker $i$ to worker $i+1$, let $m_i^\star$ be a locally sufficient relay
and define its stage-wise complexity as
% \begin{equation}
% \delta_i \triangleq L(m_i^\star),
% \end{equation}
\begin{equation}
\delta_i
\triangleq
\min_{m_i} L(m_i)
\quad \text{s.t.} \quad
\Delta_i(m_i)=0.
\end{equation}
where $L(\cdot)$ is the relay description length, so $\delta$ denotes the minimum description length of a downstream-sufficient relay. We use $\delta$ as the benchmark-level average of $\delta_i$ across subtasks. We assign each benchmark to a predicted $\delta$ regime based on its task information structure, as summarized in Table~\ref{tab:dataset-relay-regimes}.
For limited-capability downstream models, larger $\delta$ implies larger expected relay information loss: $\delta \approx 0$ when downstream-relevant information is compactly describable, $\delta > 0$ when detailed upstream evidence is needed, and $\delta \gg 0$ in cases where sufficiency requires many exact task-specific details, such as prior states, command outputs, or execution histories.

TravelPlanner contains both regimes within the same task: commonsense checks evaluate mostly local phase validity and therefore require low-complexity relays, whereas hard constraints require global plan consistency and therefore induce higher relay complexity.
We therefore report TravelPlanner-CS ($\delta \approx 0$) and TravelPlanner-HC ($\delta > 0$) separately.
We report Success Rate (SR) for ALFWorld and WorkBench, SR and Average Reward for WebShop, Item~F1 and Row~F1 for WideSearch, and SR, Commonsense and Hard-constraint Macro Pass rates for TravelPlanner.
Detailed benchmark descriptions and relay-complexity justifications are provided in Appendix~\ref{app:benchmarks}.

\begin{table*}[t]
\centering
\small
\caption{\textbf{Estimated relay-complexity regime for each benchmark.}
$\delta$ denotes the estimated relay complexity, defined as the minimum description length of a downstream-sufficient relay. 
The regime ($\approx 0$, $> 0$, or $\gg 0$) is assigned based on each task's information structure.}
\resizebox{\textwidth}{!}{
\begin{tabular}{llc}
\toprule
\textbf{Dataset / Metric} & \textbf{Relay property} & \textbf{Estimated relay complexity $\delta$} \\
\midrule
ALFWorld 
& Needs only current object state and location. 
& \textcolor{relayGreen}{$\delta \approx 0$} \\
WideSearch 
& Subqueries are mostly independent. 
& \textcolor{relayGreen}{$\delta \approx 0$} \\
TravelPlanner-CS
& Commonsense checks are mostly local to each worker.
& \textcolor{relayGreen}{$\delta \approx 0$} \\
WebShop 
& Purchase decisions require accumulated product and navigation evidence. 
& \textcolor{relayOrange}{$\delta > 0$} \\
WorkBench 
& Downstream actions require exact prior states and command outputs. 
& \textcolor{relayRed}{$\delta \gg 0$} \\
TravelPlanner-HC
& Hard constraints require global evidence and cross-stage consistency. 
& \textcolor{relayOrange}{$\delta > 0$} \\
\bottomrule
\end{tabular}
}
\label{tab:dataset-relay-regimes}
\vspace{-0.8em}
\end{table*}

\textbf{Prototypes.}
To separate planner-induced subtask structure from relay-based context compression, we compare three controlled prototypes, as illustrated in Figure~\ref{fig:intro-overview}(a). 
\textbf{SAS} is a standard single-agent system. 
A single agent receives the original task instruction and carries all intermediate reasoning, actions, and observations in one accumulated context until producing the final output. 
\textbf{SAS-contextflow} is a single-agent system that follows the same planner-induced subtask sequence and role assignments as MAS, while keeping the accumulated context fully visible across subtasks.
Each step is conditioned on the current sub-instruction together with the complete upstream context. 
\textbf{MAS} uses the same planner-induced subtask sequence as SAS-contextflow, but assigns each subtask to a separate worker with an isolated local context.
Worker $i$ receives its sub-instruction and incoming relay $m_{i-1}$, compressed from the upstream context.
Thus, MAS differs from SAS-contextflow only by replacing full accumulated context with compressed relay communication, which directly reflects the trade-off between context reduction and relay information loss.
To ensure a fair comparison, all prototypes are evaluated under the same maximum total step budget.

\textbf{Models.}
We evaluate all three prototypes under three model scales to test how capability modulates the relay compression trade-off: \textbf{Qwen2.5-7B-Instruct} (weak), \textbf{GPT-4o-mini} (medium), and \textbf{Qwen3.5-27B} (strong). 
For Qwen3.5-27B, we use the AWQ 4-bit quantized checkpoint
(\textbf{Qwen3.5-27B-AWQ-4bit}) for efficient deployment.
This ordered capability axis allows us to test how the same relay-complexity regime interacts with model capability: context reduction is expected to dominate when relays are low-complexity, whereas relay information loss becomes more limiting as relay complexity increases. Detailed experimental settings are provided in Appendix~\ref{app:setting}.

Our experiments are designed to validate Theorem~\ref{theorem:mas-gain-decomposition} through two research questions: 
\textbf{(RQ1)} Do low-complexity relays ($\delta \approx 0$) yield consistent MAS gains over the SAS-contextflow baseline, as expected when the context-reduction benefit outweighs the capability-weighted relay-loss cost?
\textbf{(RQ2)} Do higher-complexity relays ($\delta > 0$) cause MAS gains to shrink or reverse when bounded communication cannot preserve all downstream-relevant information?

%---------------------------------------------------------------
\subsection{Main Results}
%---------------------------------------------------------------

\begin{table}[!t]
\centering
\footnotesize
\setlength{\tabcolsep}{4pt}
\renewcommand{\arraystretch}{0.92}
\setlength{\abovecaptionskip}{2pt}
\setlength{\belowcaptionskip}{3pt}
\caption{MAS gains over SAS-contextflow, grouped by estimated relay complexity $\delta$.}
\label{tab:relay-regime-summary}
\resizebox{0.74\textwidth}{!}{%
\begin{tabular}{lcccc}
\toprule
\textbf{Task / Metric} & \textbf{Estimated relay complexity $\delta$} & \textbf{Qwen2.5-7B} & \textbf{GPT-4o-mini} & \textbf{Qwen3.5-27B} \\
\midrule
ALFWorld & \textcolor{relayGreen}{$\delta \approx 0$} & \gainpos{+0.194} & \gainpos{+0.157} & \gainpos{+0.023} \\
WideSearch & \textcolor{relayGreen}{$\delta \approx 0$} & \gainpos{+0.079} & \gainpos{+0.063} & \gainpos{+0.028} \\
TravelPlanner-CS & \textcolor{relayGreen}{$\delta \approx 0$} & \gainpos{+0.011} & \gainpos{+0.183} & \gainpos{+0.028} \\
\midrule
WebShop & \textcolor{relayOrange}{$\delta > 0$} & \gainpos{+0.080} & \gainpos{+0.086} & \gainneg{-0.003} \\
WorkBench & \textcolor{relayRed}{$\delta \gg 0$} & \gainneg{-0.005} & \gainneg{-0.086} & \gainneg{-0.014} \\
TravelPlanner-HC & \textcolor{relayOrange}{$\delta > 0$} & \gainpos{+0.017} & \gainpos{+0.161} & \gainneg{-0.233} \\
\bottomrule
\end{tabular}%
}
\end{table}

\begin{figure*}[!t]
\centering
\includegraphics[width=0.95\textwidth]{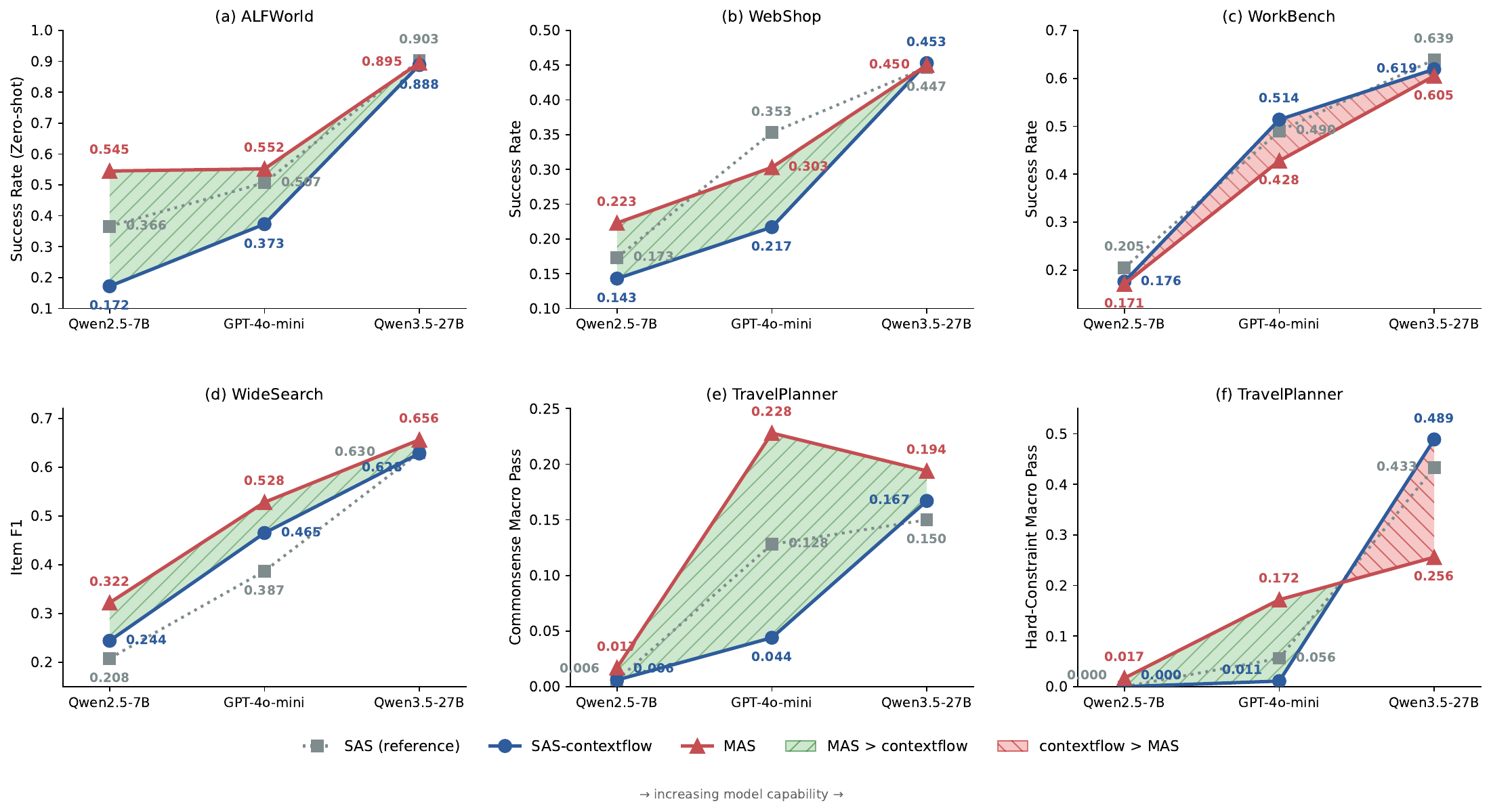}
\vspace{-0.3em}
\caption{\textbf{Per-task performance across three model scales.} Capability increases from left to right. Green and red shading indicate where MAS outperforms or underperforms SAS-contextflow.}
\label{fig:all_datasets}
\vspace{-1.2em}
\end{figure*}

\textbf{Main Results.}
Figure~\ref{fig:all_datasets} plots MAS, SAS-contextflow, and SAS as trajectories over increasing model capability, with green shading where MAS leads SAS-contextflow, and red shading where SAS-contextflow leads.
To connect these results to our theory, Table~\ref{tab:relay-regime-summary} groups each benchmark or metric by its estimated relay complexity $\delta$ and reports the empirical trend.
Detailed per-metric results are provided in Appendix~\ref{app:results}.
We organize the analysis below by estimated relay-complexity regime.

\textbf{RQ1: Low-complexity-relay Benchmarks ($\delta \approx 0$): ALFWorld, WideSearch, and TravelPlanner-CS.}
These three benchmarks fall into the low-complexity relay regime ($\delta \approx 0$) because their natural decompositions admit compact sufficient relays. 
WideSearch is a structured information-retrieval task whose subqueries are parallel and largely independent, so no cross-worker relay is required. 
In ALFWorld, our MAS design uses an explorer to locate the target object and an executor to complete the action sequence, so the relay only needs to provide the object location. 
TravelPlanner-CS evaluates local commonsense feasibility, such as transportation, accommodation, dining, and attraction validity. 
As shown in Figure~\ref{fig:all_datasets}, MAS is competitive with or stronger than SAS on these benchmarks, indicating that the decomposition is well aligned with the task structure. 
More importantly, under the controlled comparison with SAS-contextflow, MAS yields consistently positive gains: $+0.194$, $+0.157$, and $+0.023$ on ALFWorld; $+0.079$, $+0.063$, and $+0.028$ on WideSearch Item~F1; and $+0.011$, $+0.183$, and $+0.028$ on TravelPlanner-CS, for Qwen2.5-7B, GPT-4o-mini, and Qwen3.5-27B, respectively. 
These consistently positive gains support the low-complexity-relay prediction: MAS reduces context interference when downstream-relevant information can be compactly transmitted, while stronger models derive a smaller marginal benefit from such isolation.

\textbf{RQ2: Higher-complexity-relay Benchmarks ($\delta>0$): WorkBench, WebShop, and TravelPlanner-HC.}
These benchmarks instantiate higher-complexity relay regimes because downstream decisions require detailed upstream evidence that bounded relays cannot fully preserve. 
% In WorkBench, shell subtasks mutate the filesystem, and later commands often depend on exact prior states, file paths, command outputs, and environment variables, making downstream-sufficient relays highly complex ($\delta \gg 0$). 
In WorkBench, agents must retrieve cross-domain evidence before executing write actions, and later decisions depend on exact intermediate read results such as object IDs, timestamps, and name-to-email resolutions. making downstream-sufficient relays highly complex ($\delta \gg 0$). 
In WebShop, a search-then-select decomposition allows the relay to summarize candidate products, but final selection still depends on fine-grained attributes, prices, filtering criteria, and navigation history, yielding moderate relay complexity ($\delta > 0$). 
TravelPlanner-HC is similarly high-complexity because hard constraints require global itinerary consistency across subtasks. 
Compared with SAS, MAS is no longer uniformly stronger in these regimes, and the strongest model often favors full shared context, especially on TravelPlanner-HC. 
Under the controlled MAS-vs-SAS-contextflow comparison, WorkBench shows consistently negative gains across all three models ($-0.005$, $-0.086$, $-0.014$), indicating that severe relay information loss dominates the MAS gain trade-off. 
WebShop and TravelPlanner-HC show a capability-dependent reversal: WebShop shifts from $+0.080$ and $+0.086$ on the weak and medium models to $-0.003$ on Qwen3.5-27B, while TravelPlanner-HC shifts from $+0.017$ and $+0.161$ to $-0.233$. 
These positive-to-negative shifts support Theorem~\ref{theorem:mas-gain-decomposition}: weaker models benefit from noisy context reduction, whereas stronger models can better exploit the full context, so the information loss induced by higher-complexity bounded relays eventually dominates.

Overall, Table~\ref{tab:relay-regime-summary} and Figure~\ref{fig:all_datasets} support the estimated relay-regime pattern. 
Sufficient relays yield positive MAS gains across models, usually shrinking with model capability. 
Insufficient relays either remain negative or flip from positive to negative as models become stronger. 
Thus, MAS helps when context reduction dominates, but loses its advantage when relay information loss becomes the limiting factor, especially on globally coupled tasks.

\section{Ablation Studies}
\label{sec:ablation}
% \vspace{-1em}
We present two ablation studies to test the core mechanisms governing MAS gain: relay-induced context isolation (as distinct from decomposition structure), and relay sufficiency (as distinct from relay length).
% \vspace{-1em}

\begin{table*}[!t]
\centering
\small
\setlength{\tabcolsep}{4pt}
\renewcommand{\arraystretch}{1.10}
\caption{Planning ablation on ALFWorld, WebShop, and TravelPlanner. SAS-Plan uses the same decomposed sub-instructions as MAS but maintains a single shared context.}
\label{tab:planner_ablation}
\resizebox{\textwidth}{!}{%
\begin{tabular}{ll ccc ccc ccc}
\toprule
& & \multicolumn{3}{c}{Qwen2.5-7B-Instruct} & \multicolumn{3}{c}{GPT-4o-mini} & \multicolumn{3}{c}{Qwen3.5-27B} \\
\cmidrule(lr){3-5} \cmidrule(lr){6-8} \cmidrule(lr){9-11}
\textbf{Task} & \textbf{Metric} & SAS & SAS-Plan & \textbf{Gap} & SAS & SAS-Plan & \textbf{Gap} & SAS & SAS-Plan & \textbf{Gap} \\
\midrule
ALFWorld & SR & 0.366 & 0.298 & \planlower{$-0.068$} & 0.507 & 0.448 & \planlower{$-0.059$} & 0.903 & 0.709 & \planlower{$-0.194$} \\
\midrule
WebShop & SR & 0.173 & 0.123 & \planlower{$-0.050$} & 0.353 & 0.270 & \planlower{$-0.083$} & 0.447 & 0.466 & \planhigher{$+0.019$} \\
WebShop & Reward & 0.557 & 0.526 & \planlower{$-0.031$} & 0.666 & 0.492 & \planlower{$-0.174$} & 0.725 & 0.752 & \planhigher{$+0.027$} \\
\midrule
TravelPlanner & CS Macro & 0.006 & 0.011 & \planhigher{$+0.005$} & 0.128 & 0.094 & \planlower{$-0.034$} & 0.150 & 0.167 & \planhigher{$+0.017$} \\
TravelPlanner & HC Macro & 0.000 & 0.000 & \planlower{$0.000$}  & 0.056 & 0.022 & \planlower{$-0.034$} & 0.433 & 0.361 & \planlower{$-0.072$} \\
\bottomrule
\end{tabular}
}
\end{table*}

\textbf{Does Planning Structure Alone Explain MAS Gains?} To isolate whether MAS gains arise from relay-induced context isolation or merely from the subtask structure, we introduce \textbf{SAS-Plan}: a single-agent prototype that receives the same decomposed sub-instructions as MAS workers but executes over a single shared context. Under the formulation of Section~\ref{sec:ib}, SAS, SAS-Plan, and SAS-contextflow sit at the no-compression point of the relay design space ($H(M_i\mid m_i){=}0$, $\Delta_i{=}0$).

Table~\ref{tab:planner_ablation} shows that SAS-Plan consistently performs below SAS across all models on ALFWorld ($-0.068$, $-0.059$, and $-0.194$ for Qwen2.5-7B, GPT-4o-mini, and Qwen3.5-27B). On WebShop and TravelPlanner, SAS-Plan generally falls below the standard SAS baseline, with marginal improvements far smaller than those of MAS. This confirms that context isolation, rather than decomposition structure alone, is the operative mechanism behind MAS gains.

\textbf{Does Relay Information Loss Causally Govern Performance?}
Theorem~\ref{theorem:mas-gain-decomposition} predicts that relay information loss $\Delta_i(m_i)$ acts as a direct performance cost. To test this causal mechanism, we construct higher-loss relay variants by deliberately removing downstream-required information while keeping the same MAS decomposition. Specifically, on ALFWorld, we remove location and object-state information from the relay; on WebShop, we retain only the product name while discarding price, attributes, and navigation paths. 

Table~\ref{tab:sufficient_vs_insufficient_relay} shows that increasing relay information loss consistently reduces performance across all models. This provides causal evidence that $\Delta_i(m_i)$ operates as a performance cost rather than a merely descriptive quantity. In particular, when downstream-required information is deliberately omitted from the relay, the performance degradation supports the information-bottleneck prediction that relay compression helps only when it preserves task-relevant information.

% \vspace{-0.8em}
% \begin{table}[!tbp]
% \centering
% \small
% \setlength{\tabcolsep}{5pt}
% \renewcommand{\arraystretch}{0.88}
% \setlength{\abovecaptionskip}{2pt}
% \setlength{\belowcaptionskip}{3pt}
% \caption{MAS performance under sufficient versus insufficient relay on ALFWorld and WebShop.\xj{same format as Table 3?}}
% \label{tab:sufficient_vs_insufficient_relay}
% \vspace{-0.4em}
% \resizebox{0.78\textwidth}{!}{%
% \begin{tabular}{llccc}
% \toprule
% \textbf{Task} & \textbf{Relay} & \textbf{GPT-4o-mini} & \textbf{Qwen2.5-7B-Instruct} & \textbf{Qwen3.5-27B} \\
% \midrule
% ALFWorld 
% & Sufficient   & 0.552 & 0.545 & 0.895 \\
% & Insufficient & 0.537 & 0.530 & 0.843 \\
% & \textbf{Gap} & \cellcolor{red!8}\textbf{-0.015} & \cellcolor{red!8}\textbf{-0.015} & \cellcolor{red!8}\textbf{-0.052} \\
% \midrule
% WebShop
% & Sufficient   & 0.303 & 0.223 & 0.450 \\
% & Insufficient & 0.225 & 0.215 & 0.440 \\
% & \textbf{Gap} & \cellcolor{red!8}\textbf{-0.078} & \cellcolor{red!8}\textbf{-0.008} & \cellcolor{red!8}\textbf{-0.010} \\
% \bottomrule
% \end{tabular}%
% }
% \vspace{-0.8em}
% \end{table}

\begin{table*}[!t]
\centering
\small
\setlength{\tabcolsep}{4pt}
\renewcommand{\arraystretch}{1.10}
\caption{Relay sufficiency ablation on ALFWorld and WebShop. Insufficient relays drop downstream-required information while keeping the same MAS decomposition.}
\label{tab:sufficient_vs_insufficient_relay}
\resizebox{\textwidth}{!}{%
\begin{tabular}{ll ccc ccc ccc}
\toprule
& & \multicolumn{3}{c}{Qwen2.5-7B-Instruct} & \multicolumn{3}{c}{GPT-4o-mini} & \multicolumn{3}{c}{Qwen3.5-27B} \\
\cmidrule(lr){3-5} \cmidrule(lr){6-8} \cmidrule(lr){9-11}
\textbf{Task} & \textbf{Metric} 
& Sufficient & Insufficient & \textbf{Gap} 
& Sufficient & Insufficient & \textbf{Gap} 
& Sufficient & Insufficient & \textbf{Gap} \\
\midrule
ALFWorld & SR 
& 0.545 & 0.530 & \planlower{$-0.015$} 
& 0.552 & 0.537 & \planlower{$-0.015$} 
& 0.895 & 0.843 & \planlower{$-0.052$} \\
\midrule
WebShop & SR 
& 0.223 & 0.215 & \planlower{$-0.008$} 
& 0.303 & 0.225 & \planlower{$-0.078$} 
& 0.450 & 0.440 & \planlower{$-0.010$} \\
\bottomrule
\end{tabular}
}
\end{table*}

\section{Summary and Limitations}
\label{sec:conclusion}
\textbf{Summary.} We presented an information-bottleneck framework that explains when LLM-based multi-agent systems help over single-agent systems. Our key insight is that MAS gains and losses arise from bounded inter-worker relays, which trade off context reduction against downstream-relevant information loss. The effective parameter $\beta$ captures how this trade-off changes with model capability, explaining why the same relay can help weaker models but hurt stronger ones. Across five benchmarks and three model scales, our experiments support this prediction: MAS helps with near-sufficient relays, but loses its advantage when relays become highly insufficient.

% \textbf{Implications.} Our framework turns MAS design into two practical questions. First, when should we use MAS? MAS yields consistent gains whenever a task can be decomposed with no relay information loss, since the only remaining effect is upstream context reduction. Second, in the more common case, how should we design MAS when relay loss is unavoidable? Our information-bottleneck objective points to a natural next step: optimize the relay encoder jointly with the downstream task, for instance through structured output formats trained to preserve downstream-relevant information. The capability-dependent $\beta$ raises a longer-term question: can adaptive relays sustain MAS gains in heterogeneous agent teams with different base models? 
% Such an extension would move our framework from a static decomposition toward a dynamic theory of relay design for heterogeneous and evolving multi-agent systems.

\noindent\textbf{Implications.} Our framework turns MAS design into two practical questions. First, when should we use MAS? When a task admits low relay information loss decomposition, MAS gains follow directly from upstream context reduction. Second, in the more common case, how should we design MAS when relay loss is unavoidable? Our IB objective points to a natural next step: jointly optimizing the relay encoder with the downstream task. A longer-term question raised by the capability-dependent $\beta$ is whether adaptive relays can sustain MAS gains across heterogeneous agent teams with different base models.

\noindent\textbf{Limitations.} Despite these insights, our work has several limitations. We use $\beta$ as a qualitative capability parameter rather than a directly estimated quantity. We assign $\delta$ by task-structure analysis rather than estimating it quantitatively. Future work should develop quantitative estimators for both, and use them to train adaptive relays.

\section*{Acknowledgments}
This work was supported in part by the Advanced Research Projects Agency for Health (ARPA-H) under grant 1AY1AX000053, the Texas A\&M University Division of Research Targeted Proposal Teams Funding Program, and the Texas A\&M Institute of Data Science Thematic Labs Program.

\bibliographystyle{refstyle}
\bibliography{ref}

\newpage
\appendix

\newpage

\section{Theoretical Details}
\subsection{Proof of Proposition~\ref{prop:equivalence}}
\label{app:equivalence-proof}

\begin{proof}
Fix an arbitrary SAS with context trajectory $M_0,M_1,\ldots,M_T$ and final output $y_{\mathrm{SAS}}=\mathrm{Agg}(M_T)$. Let \(0=T_0<T_1<\cdots<T_n=T\) be any partition of the SAS trajectory. We construct an \(n\)-worker MAS that simulates the SAS over the consecutive intervals
\([T_{i-1},T_i]\).

The planner assigns \(X_1=X\) and \(X_i=\varnothing\) for all \(i\ge 2\). Each worker uses the same thought and action policies \((\pi_T,\pi_A)\) as the SAS, and worker \(i\) is responsible for reproducing the SAS updates from time \(T_{i-1}\) to \(T_i\). The relay sent by worker \(i\) is its full terminal context, $m_i = M_{T_i}^{(i)}.$
Equivalently, each worker-level aggregator \(\mathrm{Agg}_i\) is the identity map. The system-level aggregator ignores intermediate worker contexts except for the final one and returns
\[
    \mathrm{Agg}_{\mathrm{MAS}}
    \bigl(M_{T_1}^{(1)},\ldots,M_{T_n}^{(n)}\bigr)
    =
    \mathrm{Agg}\bigl(M_{T_n}^{(n)}\bigr).
\]

We first verify that this construction is feasible under unbounded relay bandwidth. By assumption, the SAS context has finite entropy at every step. Since each relay is exactly a transmitted context \(M_{T_i}^{(i)}\), it also has finite entropy whenever the simulation is well defined. Hence, for any
\[
    B_{\mathrm{sys}}
    \ge
    \max_{1\le i\le n} H\bigl(M_{T_i}^{(i)}\bigr),
\]
all relays satisfy \(H(m_i)\le B_{\mathrm{sys}}\). Thus the construction becomes feasible as \(B_{\mathrm{sys}}\to\infty\).

It remains to prove that the constructed MAS reproduces the SAS trajectory. We show by induction that, for every \(i=1,\ldots,n\), $M_{T_i}^{(i)} \equiv M_{T_i},$
where \(\equiv\) denotes equality of the information contained in the context, up to redundant inclusion of the entry observation.

For the base case, worker \(1\) starts from 
\[
M_0^{(1)}=(X,o_0)=M_0.
\]
Since it uses the same policies \((\pi_T,\pi_A)\) and the same task kernels as the SAS, it generates the same thoughts, actions, observations, and context updates over the interval \([0,T_1]\). Therefore,
\[
    M_{T_1}^{(1)} = M_{T_1}.
\]

For the induction step, assume that $M_{T_i}^{(i)} \equiv M_{T_i}$
for some \(i<n\). Worker \(i+1\) receives the full relay $m_i = M_{T_i}^{(i)}.$
Because \(X_{i+1}=\varnothing\) and the entry observation \(o_0^{(i+1)}=o_{T_i}\) is already contained in the SAS context \(M_{T_i}\), the initial context of worker \(i+1\) satisfies
$
    M_0^{(i+1)}
    =
    (\varnothing,o_0^{(i+1)})\oplus m_i
    \equiv
    M_{T_i}.
$
Starting from an information-equivalent context and using the same policies and task kernels, worker \(i+1\) reproduces the SAS updates over \([T_i,T_{i+1}]\). Hence,
\[
    M_{T_{i+1}}^{(i+1)}
    \equiv
    M_{T_{i+1}}.
\]
This completes the induction.

Taking \(i=n\) gives $M_{T_n}^{(n)} \equiv M_T.$
Therefore, the MAS output is
\[
\begin{aligned}
    y_{\mathrm{MAS}}
    &=
    \mathrm{Agg}_{\mathrm{MAS}}
    \bigl(M_{T_1}^{(1)},\ldots,M_{T_n}^{(n)}\bigr) \\
    &=
    \mathrm{Agg}\bigl(M_{T_n}^{(n)}\bigr) \\
    &=
    \mathrm{Agg}(M_T) \\
    &=
    y_{\mathrm{SAS}}.
\end{aligned}
\]
Thus, for any SAS, there exists a MAS that produces the same final output whenever the relay bandwidth is unbounded. This proves the claim.
\end{proof}
\subsection{Proof of Markov Relation}
\label{app:markov}
\begin{proof}We justify the Markov relation used in Section~\ref{sec:ib_explains}. 
For a fixed planner-induced decomposition, the downstream target $Y_{i+1}$ is determined by the task instance and the local subtask, rather than by the relay compression itself. 
The relay is compressed from the upstream context as
\[
m_i = \mathrm{Agg}_i(M_i),
\]
where $\mathrm{Agg}_i$ may be deterministic or may use external randomness independent of $Y_{i+1}$ given $M_i$ and $X_{i+1}$. 
Therefore, after conditioning on $M_i$ and $X_{i+1}$, observing $m_i$ provides no additional information about $Y_{i+1}$:
\[
p(y_{i+1}\mid m_i, M_i, X_{i+1})
=
p(y_{i+1}\mid M_i, X_{i+1}).
\]
Equivalently,
\[
m_i \perp Y_{i+1} \mid M_i, X_{i+1}.
\]
This conditional independence is exactly the conditional Markov relation
\[
m_i \leftrightarrow M_i \leftrightarrow Y_{i+1} \mid X_{i+1}.
\]
\end{proof}

\subsection{Details on Local-to-Downstream Sufficiency}
\label{app:local-to-downstream}
Given a specific task instance \(X\), a planner produces an ordered
decomposition \(X_{1:n}=(X_1,\ldots,X_n)\). Each sub-instance \(X_i\)
induces a worker-specific target variable \(Y_i\), representing the
task-relevant outcome, state, or decision that worker \(i\) is expected
to infer or produce for its assigned subproblem. We do not assume that
every instance admits a perfect decomposition; rather, whether local
relay sufficiency propagates downstream depends on the structure induced
by the chosen decomposition.

Let \(M_i\) denote the full context of worker \(i\), and let \(m_i\)
denote the relay passed from worker \(i\) to downstream workers. We say
that \(m_i\) is locally sufficient for the next target \(Y_{i+1}\) if
\[
I(Y_{i+1};M_i\mid m_i,X_{i+1})=0.
\]
That is, given \(m_i\) and the next sub-instance \(X_{i+1}\), the full
upstream context \(M_i\) provides no additional information about
\(Y_{i+1}\).

We next ask when this local condition is sufficient for a later target
\(Y_j\), \(j>i+1\). The first required structural condition is mediated
downstream dependence:
\[
I(Y_j;M_i\mid Y_{i+1},X_{i+1},X_j)=0.
\]
This means that, under the induced decomposition, the dependence of
\(Y_j\) on the upstream context \(M_i\) is mediated by the adjacent target
\(Y_{i+1}\) and the relevant sub-instance contexts. Thus, \(Y_{i+1}\)
acts as a local information interface between worker \(i\) and later
workers.

The second condition is downstream-context non-interference:
\[
I(Y_{i+1};M_i\mid m_i,X_{i+1},X_j)
=
I(Y_{i+1};M_i\mid m_i,X_{i+1}).
\]
This rules out cases where a later sub-instance \(X_j\) retroactively
changes the local sufficiency relation between \(M_i\), \(m_i\), and
\(Y_{i+1}\).

Under these two conditions, local sufficiency propagates downstream.
Indeed, local sufficiency and non-interference imply
\[
I(Y_{i+1};M_i\mid m_i,X_{i+1},X_j)=0.
\]
Together with mediated downstream dependence,
\[
I(Y_j;M_i\mid Y_{i+1},X_{i+1},X_j)=0,
\]
we obtain
\[
I(Y_j;M_i\mid m_i,X_{i+1},X_j)=0.
\]
Hence, a relay that is sufficient for the next target is also sufficient
for the later target under this decomposition.

Conversely, if
\[
I(Y_j;M_i\mid Y_{i+1},X_{i+1},X_j)>0,
\]
then the decomposition leaves a hidden long-range dependency from worker
\(i\)'s context to the later target \(Y_j\). In this case, a relay
sufficient for \(Y_{i+1}\) may still be insufficient for \(Y_j\), because
\(M_i\) can contain downstream-relevant information not captured by the
adjacent target. Thus, local-to-downstream sufficiency is not universal;
it depends on whether the task instance admits a decomposition whose
adjacent targets form sufficient information interfaces for later
targets.

\section{Experimental Details}
\subsection{Datasets}
\label{app:benchmarks}

We validate the stage-wise MAS gain across five agentic benchmarks that span a representative range of task structures, relay regimes, and evaluation criteria.

\textbf{ALFWorld}~\cite{shridhar2020alfworld} is a text-based embodied household task benchmark built from ALFRED~\citep{shridhar2020alfred}.
An agent must navigate rooms, locate objects, and complete goals such as \emph{``put a heated egg in the fridge''} through a sequence of symbolic actions.
Tasks decompose naturally into short, spatially local subgoals, such as locating, picking, and placing objects, making upstream navigation history largely irrelevant once the current subgoal state is communicated. We evaluate on the episode \texttt{valid\_unseen} split. We report task success rate (SR) under both zero-shot and few-shot settings.

\textbf{WebShop}~\citep{yao2022webshop} is a simulated e-commerce environment containing over one million products. Each task specifies target attributes, option values, and a price ceiling. The agent interacts with a text browser through search, click, and think actions, and must search products, inspect candidate pages, compare attributes and prices, select required options, and purchase the final item. The final purchase decision requires cross-product attribute comparison and price verification across navigation steps, making the relay regime inherently insufficient. We evaluate on the 100 random selected items. Success requires a perfect reward, meaning that all target attributes, options, and price constraints are satisfied, and average reward measures partial match quality.

\textbf{WorkBench}~\citep{styles2024workbench} is a workplace-software benchmark where agents operate over email, calendar, CRM, project management, and analytics tools to complete natural-language office requests. We use the multi-domain split, which contains 210 tasks spanning two to five domains. About 91\% of queries are conditional and require the agent to first retrieve cross-domain evidence before executing a write action such as \texttt{create\_event}, \texttt{send\_email}, \texttt{create\_task}, or \texttt{add\_customer}. WorkBench is therefore a strongly insufficient-relay setting. The writer depends on exact intermediate read results, including object IDs, timestamps, branch outcomes, and name-to-email resolutions. These details are hard to preserve under bounded relay communication, making relay loss severe. We report task success rate.

% \textbf{WideSearch}~\citep{wong2025widesearch} is a structured information-retrieval benchmark.
% Given a complex query, the agent must retrieve rows of structured evidence across multiple
% independent subqueries.
% We use 66 tasks from the English subset and select tasks that can be decomposed into
% subtasks without requiring cross-subtask relay information.
% Each output row is largely recoverable from its own subquery alone, with negligible
% cross-row coupling, placing this benchmark in the near-sufficient relay regime.
% We report Item~F1 and Row~F1.

\textbf{WideSearch}~\citep{wong2025widesearch} is a structured information-retrieval benchmark. Each task provides a complex query and requires the agent to return a table of evidence. We use 66 tasks from the English subset, where each query can be decomposed along a clear axis, such as entities, categories, or time windows. Each subtask retrieves a separate part of the table, and most rows can be recovered from their own subquery without relying on other rows. WideSearch is therefore a sufficient relay setting because little cross-subtask information must be preserved. We report Item F1 and Row F1.

% \textbf{TravelPlanner}~\cite{xie2024travelplanner} requires agents to generate
% multi-day travel itineraries that simultaneously satisfy transportation, accommodation,
% dining, and attraction constraints.
% Local phase constraints, measured by commonsense checks, are largely relay-sufficient,
% whereas global aggregation constraints, such as total budget and minimum nights, require
% evidence from the full plan history, producing a mixed relay regime.
% We report commonsense micro pass rate~(CS-micro) and hard-constraint micro pass
% rate~(HC-micro) as separate metrics.

\textbf{TravelPlanner}~\citep{xie2024travelplanner} is a trip-planning benchmark requires agents to generate multi-day itineraries under transportation, accommodation, dining, attraction, and budget constraints. We use the 180-query validation split. TravelPlanner has a mixed relay regime. Commonsense checks assess local feasibility and are mostly relay-sufficient, since each planning phase can be validated from compact local information. Hard constraints assess global consistency, including total budget, room type, cuisine coverage, house rules, transportation restrictions, and minimum-night requirements, which require evidence from the full itinerary, making relays insufficient for the global contraints. We report commonsense macro pass rate (CS-macro) and hard-constraint macro pass rate (HC-macro).

\subsection{Models}
\label{app:setting}

We evaluate three base models spanning a capability range:
Qwen2.5-7B-Instruct, GPT-4o-mini, and Qwen3.5-27B-AWQ-4bit.
Qwen2.5-7B-Instruct is served locally with vLLM~\citep{kwon2023efficient}
on a single NVIDIA RTX A6000 GPU in bfloat16. GPT-4o-mini is accessed
through the OpenAI Chat Completions API. Qwen3.5-27B-AWQ-4bit is served
locally with vLLM on a single A6000 GPU using AWQ~\citep{lin2024awq}
4-bit quantization. We use greedy decoding with \texttt{temperature = 0.0}
for all prototypes and benchmarks.

\subsection{Per-benchmark configuration}
Table~\ref{tab:exp-config} summarizes the evaluation settings. For each
benchmark, SAS, SAS-contextflow, and MAS use the same total step budget.
For SAS-contextflow and MAS, the sum of all phase budgets is kept no larger
than the SAS total budget. For WideSearch, all prototypes use DuckDuckGo as
the live web-search backend. All other benchmarks use offline environments.

\begin{table}[h]
\centering
\small
\setlength{\tabcolsep}{5pt}
\renewcommand{\arraystretch}{1.08}
\caption{Per-benchmark experimental settings.}
\label{tab:exp-config}
\begin{tabular}{lcccc}
\toprule
\textbf{Benchmark} & \textbf{Tasks} & \textbf{Total budget} & \textbf{SAS-contextflow / MAS phase steps} & \textbf{Environment} \\
\midrule
ALFWorld      & 134 & 50 & 25 + 25                  & offline simulator \\
WebShop       & 100 & 60 & 30 + 30                  & offline simulator \\
WorkBench     & 210 & 50 & 25 + 25                  & offline sandbox \\
WideSearch    &  66 & 30 & 10 + 10 + 10             & DuckDuckGo (live) \\
TravelPlanner & 180 & 60 & 20 + 12 + 12 + 12 + 4    & offline DB \\
\bottomrule
\end{tabular}
\end{table}

% \begin{table}[h]
% \centering
% \small
% \setlength{\tabcolsep}{5pt}
% \renewcommand{\arraystretch}{1.08}
% \caption{Per-benchmark experimental settings.}
% \label{tab:exp-config}
% \begin{tabular}{lcccc}
% \toprule
% \textbf{Benchmark} & \textbf{Tasks} & \textbf{Max steps} & \textbf{SAS-contextflow / MAS steps} & \textbf{Backend} \\
% \midrule
% ALFWorld      & 134 & 50 & 25 + 25                  & --- \\
% WebShop       & 100 & 60 & 30 + 30                  & --- \\
% WorkBench     & 210 & 50 & 25 + 25                  & --- \\
% WideSearch    &  66 & 30 & 10 + 10 + 10& DuckDuckGo \\
% TravelPlanner & 180 & 60 & 20 + 12 + 12 + 12 + 4      & sandbox DB \\
% \bottomrule
% \end{tabular}
% \end{table}

\section{Additional Results}
\subsection{Detailed Results}
\label{app:results}
Table~\ref{tab:main_results_all_models} presents the main empirical results across all five benchmarks and three model scales, while Tables~\ref{tab:alfworld_taskwise_zeroshot} and~\ref{tab:alfworld_taskwise_fewshot} provide a breakdown of task types for ALFWorld. Tables~\ref{tab:travelplanner_results} and~\ref{tab:widesearch_results} report the detailed metrics for TravelPlanner and WideSearch, respectively.

\begin{table*}[ht]
\centering
\small
\setlength{\tabcolsep}{4pt}
\renewcommand{\arraystretch}{1.08}
\caption{Main results across three models and three prototypes.}
\label{tab:main_results_all_models}
\resizebox{\textwidth}{!}{%
\begin{tabular}{lcccccccccc}
\toprule
\textbf{Prototype}
& \multicolumn{2}{c}{\textbf{ALFWorld}}
& \multicolumn{2}{c}{\textbf{WebShop}}
& \textbf{WorkBench}
& \multicolumn{2}{c}{\textbf{WideSearch}}
& \multicolumn{3}{c}{\textbf{TravelPlanner}} \\
\cmidrule(lr){2-3}\cmidrule(lr){4-5}\cmidrule(lr){6-6}\cmidrule(lr){7-8}\cmidrule(lr){9-11}
& \textbf{Zero-shot}
& \textbf{Few-shot}
& \textbf{SR}
& \textbf{Avg Reward}
& \textbf{SR}
& \textbf{Item F1}
& \textbf{Row F1}
& \textbf{SR}
& \textbf{CS macro}
& \textbf{HC macro} \\
\midrule

\rowcolor{blue!10}
\multicolumn{11}{c}{\rule{0pt}{2.4ex}\textbf{Qwen2.5-7B-Instruct}} \\
SAS             & 0.366 & 0.679 & 0.173 & 0.557 & 0.205 & 0.208 & 0.015 & 0.000 & 0.006 & 0.000 \\
SAS-contextflow & 0.172 & 0.485 & 0.143 & 0.557 & 0.176 & 0.244 & 0.039 & 0.000 & 0.006 & 0.000 \\
MAS             & 0.545 & 0.679 & 0.223 & 0.579 & 0.171 & 0.322 & 0.063 & 0.000 & 0.017 & 0.017 \\
\addlinespace[3pt]

\rowcolor{blue!10}
\multicolumn{11}{c}{\rule{0pt}{2.4ex}\textbf{GPT-4o-mini}} \\
SAS             & 0.507 & 0.851 & 0.353 & 0.666 & 0.490 & 0.387 & 0.109 & 0.011 & 0.128 & 0.056 \\
SAS-contextflow & 0.373 & 0.679 & 0.217 & 0.432 & 0.514 & 0.465 & 0.119 & 0.006 & 0.044 & 0.011 \\
MAS             & 0.552 & 0.836 & 0.303 & 0.654 & 0.428 & 0.528 & 0.159 & 0.028 & 0.228 & 0.172 \\
\addlinespace[3pt]

\rowcolor{blue!10}
\multicolumn{11}{c}{\rule{0pt}{2.4ex}\textbf{Qwen3.5-27B}} \\
SAS             & 0.903 & 0.993 & 0.447 & 0.725 & 0.639 & 0.630 & 0.345 & 0.089 & 0.150 & 0.433 \\
SAS-contextflow & 0.888 & 0.910 & 0.453 & 0.721 & 0.619 & 0.628 & 0.304 & 0.117 & 0.167 & 0.489 \\
MAS             & 0.895 & 0.933 & 0.450 & 0.733 & 0.605 & 0.656 & 0.338 & 0.056 & 0.194 & 0.256 \\

\bottomrule
\end{tabular}
}
\end{table*}

\begin{table*}[!htbp]
\centering
\small
\setlength{\tabcolsep}{3pt}
\caption{ALFWorld task-wise success rates in the zero-shot setting.}
\label{tab:alfworld_taskwise_zeroshot}
\resizebox{\textwidth}{!}{%
\begin{tabular}{lccccccccc}
\toprule
& \multicolumn{3}{c}{\textbf{Qwen2.5-7B-Instruct}}
& \multicolumn{3}{c}{\textbf{GPT-4o-mini}}
& \multicolumn{3}{c}{\textbf{Qwen3.5-27B}} \\
\cmidrule(lr){2-4}\cmidrule(lr){5-7}\cmidrule(lr){8-10}
\textbf{Task Type}
& \textbf{SAS}
& \textbf{SAS-contextflow}
& \textbf{MAS}
& \textbf{SAS}
& \textbf{SAS-contextflow}
& \textbf{MAS}
& \textbf{SAS}
& \textbf{SAS-contextflow}
& \textbf{MAS} \\
\midrule
pick\_and\_place         & 0.833 & 0.375 & 0.750 & 0.750 & 0.625 & 0.833 & 1.000 & 0.958 & 0.917 \\
pick\_clean\_then\_place & 0.387 & 0.065 & 0.645 & 0.387 & 0.129 & 0.323 & 0.871 & 0.968 & 0.871 \\
pick\_cool\_then\_place  & 0.286 & 0.143 & 0.714 & 0.333 & 0.286 & 0.524 & 0.952 & 0.952 & 0.905 \\
pick\_heat\_then\_place  & 0.130 & 0.261 & 0.435 & 0.217 & 0.087 & 0.565 & 0.652 & 0.478 & 0.783 \\
pick\_two\_obj           & 0.176 & 0.059 & 0.235 & 0.765 & 0.706 & 0.471 & 1.000 & 1.000 & 0.941 \\
look\_at\_obj            & 0.278 & 0.111 & 0.333 & 0.722 & 0.611 & 0.667 & 1.000 & 1.000 & 1.000 \\
\bottomrule
\end{tabular}
}
\end{table*}

\begin{table*}[!htbp]
\centering
\small
\setlength{\tabcolsep}{3pt}
\caption{ALFWorld task-wise success rates in the few-shot setting.}
\label{tab:alfworld_taskwise_fewshot}
\resizebox{\textwidth}{!}{%
\begin{tabular}{lccccccccc}
\toprule
& \multicolumn{3}{c}{\textbf{Qwen2.5-7B-Instruct}}
& \multicolumn{3}{c}{\textbf{GPT-4o-mini}}
& \multicolumn{3}{c}{\textbf{Qwen3.5-27B}} \\
\cmidrule(lr){2-4}\cmidrule(lr){5-7}\cmidrule(lr){8-10}
\textbf{Task Type}
& \textbf{SAS}
& \textbf{SAS-contextflow}
& \textbf{MAS}
& \textbf{SAS}
& \textbf{SAS-contextflow}
& \textbf{MAS}
& \textbf{SAS}
& \textbf{SAS-contextflow}
& \textbf{MAS} \\
\midrule
pick\_and\_place         & 0.917 & 0.792 & 0.625 & 0.917 & 0.792 & 0.875 & 1.000 & 0.958 & 1.000 \\
pick\_clean\_then\_place & 0.742 & 0.419 & 0.645 & 0.871 & 0.387 & 1.000 & 0.968 & 1.000 & 1.000 \\
pick\_cool\_then\_place  & 0.810 & 0.476 & 0.714 & 0.857 & 0.667 & 0.762 & 1.000 & 0.905 & 0.952 \\
pick\_heat\_then\_place  & 0.826 & 0.652 & 0.913 & 0.826 & 0.870 & 0.826 & 1.000 & 0.609 & 0.696 \\
pick\_two\_obj           & 0.176 & 0.235 & 0.412 & 0.706 & 0.706 & 0.588 & 1.000 & 1.000 & 0.941 \\
look\_at\_obj            & 0.389 & 0.222 & 0.722 & 0.889 & 0.778 & 0.833 & 1.000 & 1.000 & 1.000 \\
\bottomrule
\end{tabular}
}
\end{table*}

\begin{table*}[!htbp]
\centering
\small
\setlength{\tabcolsep}{4pt}
\caption{TravelPlanner results across three prototypes and three models.}
\label{tab:travelplanner_results}
\resizebox{\textwidth}{!}{%
\begin{tabular}{lccccccccc}
\toprule
& \multicolumn{3}{c}{\textbf{Qwen2.5-7B-Instruct}}
& \multicolumn{3}{c}{\textbf{GPT-4o-mini}}
& \multicolumn{3}{c}{\textbf{Qwen3.5-27B}} \\
\cmidrule(lr){2-4}\cmidrule(lr){5-7}\cmidrule(lr){8-10}
\textbf{Metric}
& \textbf{SAS}
& \textbf{SAS-contextflow}
& \textbf{MAS}
& \textbf{SAS}
& \textbf{SAS-contextflow}
& \textbf{MAS}
& \textbf{SAS}
& \textbf{SAS-contextflow}
& \textbf{MAS} \\
\midrule
Final Pass Rate         & 0.000 & 0.000 & 0.000 & 0.011 & 0.006 & 0.028 & 0.089 & 0.117 & 0.056 \\
Commonsense Macro Pass  & 0.006 & 0.006 & 0.017 & 0.128 & 0.044 & 0.228 & 0.150 & 0.167 & 0.194 \\
Hard Macro Pass Rate    & 0.000 & 0.000 & 0.017 & 0.056 & 0.011 & 0.172 & 0.433 & 0.489 & 0.256 \\
Delivery Rate           & 0.922 & 0.756 & 1.000 & 0.928 & 0.872 & 1.000 & 0.989 & 0.983 & 0.967 \\
Commonsense Micro Pass  & 0.509 & 0.374 & 0.587 & 0.700 & 0.642 & 0.820 & 0.750 & 0.767 & 0.800 \\
Hard Micro Pass Rate    & 0.043 & 0.019 & 0.029 & 0.095 & 0.021 & 0.262 & 0.433 & 0.450 & 0.286 \\
\bottomrule
\end{tabular}
}
\end{table*}

\begin{table*}[!htbp]
\centering
\small
\setlength{\tabcolsep}{3pt}
\caption{WideSearch results across three prototypes and three models.}
\label{tab:widesearch_results}
\resizebox{\textwidth}{!}{%
\begin{tabular}{lccccccccc}
\toprule
& \multicolumn{3}{c}{\textbf{Qwen2.5-7B-Instruct}}
& \multicolumn{3}{c}{\textbf{GPT-4o-mini}}
& \multicolumn{3}{c}{\textbf{Qwen3.5-27B}} \\
\cmidrule(lr){2-4}\cmidrule(lr){5-7}\cmidrule(lr){8-10}
\textbf{Metric}
& \textbf{SAS}
& \textbf{SAS-contextflow}
& \textbf{MAS}
& \textbf{SAS}
& \textbf{SAS-contextflow}
& \textbf{MAS}
& \textbf{SAS}
& \textbf{SAS-contextflow}
& \textbf{MAS} \\
\midrule
Score (\%)          & 0.00 & 0.00 & 0.00 & 1.52 & 0.00 & 0.00 & 4.55 & 3.03 & 0.00 \\
Item F1 (\%)        & 20.83 & 24.35 & 32.24 & 38.69 & 46.52 & 52.77 & 62.96 & 62.76 & 65.60 \\
Row F1 (\%)         & 1.47 & 3.94 & 6.26 & 10.93 & 11.85 & 15.88 & 34.53 & 30.43 & 33.76 \\
Item Precision (\%) & 35.79 & 30.33 & 39.95 & 50.27 & 56.61 & 60.38 & 68.41 & 69.53 & 72.39 \\
Row Precision (\%)  & 2.17 & 4.72 & 8.33 & 14.19 & 13.45 & 18.26 & 37.08 & 33.91 & 35.83 \\
Item Recall (\%)    & 17.82 & 22.92 & 29.95 & 35.18 & 43.41 & 50.50 & 61.05 & 60.53 & 63.39 \\
Row Recall (\%)     & 1.38 & 3.62 & 5.53 & 9.76 & 11.26 & 14.94 & 33.41 & 29.36 & 32.88 \\
\bottomrule
\end{tabular}
}
\end{table*}
\FloatBarrier

\subsection{Case Study: MAS Gain Flip across $\beta$ in TravelPlanner}
\label{app:case_study}
We use TravelPlanner task \texttt{tp\_1} to show the capability-dependent MAS gain reversal. The task asks for a three-day itinerary from Oakland to Tucson under a hard budget of \$1{,}400, with \texttt{budget} kept as a plan-level global and hidden from all MAS sub-agents. On GPT-4o-mini, MAS produces the only HC-valid plan, while SAS-contextflow leaves Day-1 and Day-2 accommodation slots empty and is gated by \texttt{is\_not\_absent} before \texttt{valid\_cost} is evaluated. On Qwen3.5-27B, SAS-contextflow can use the visible \$1{,}400 cap in its accumulated context and stays within budget, while MAS loses this signal through bounded relays and produces a \$1{,}599 itinerary. Thus, \(\beta\Delta_i\) dominates at higher capability and the MAS gain flips. Together, these two regimes show the predicted transition: as model capability increases, MAS gains from context reduction become smaller, while the cost of relay information loss becomes more visible.

\begin{promptbox}[Case: TravelPlanner \texttt{tp\_1} MAS Gain flip across $\beta$]
Query: "Please draw up a 3-day travel itinerary for one person, beginning in
Oakland and heading to Tucson from March 15th to March 17th, 2022, with a
budget of $1,400."

Code-defined sub-specs (build_initial_subqueries):
  transport     = {org:"Oakland", dest:"Tucson", days:3, people_number:1, ...}
  accommodation = {people_number:1, house_rule:null, room_type:null, city_list:["Tucson"]}
  dining        = {org:"Oakland", cuisine:null, city_list:["Tucson"]}
  attraction    = {city_list:["Tucson"]}
Plan-level global (NOT in any sub-spec):
  budget = $1,400

==================================================================================
GPT-4o-mini (mid beta)        SAS-contextflow                   MAS
----------------------------------------------------------------------------------
Day 1 from Oakland to Tucson  Flight F4002752                   Flight F4002752
      Breakfast               Malais By Anands, Oakland         Malais By Anands, Oakland
      Lunch                   Villa Tevere, Tucson              Villa Tevere, Tucson
      Dinner                  Mocha, Tucson                     La Plage, Tucson
      Attraction              Pima Air & Space Museum           Pima Air & Space Museum
      Accommodation           -                    [missing]    Private room with private bathroom

Day 2 Tucson                  -                                 -
      Breakfast               Chai Point, Tucson                Mocha, Tucson
      Lunch                   Ooh Lala, Tucson                  Pirates of Grill, Tucson
      Dinner                  Pirates of Grill, Tucson          Chutneez Restaurant Lounge & Bar
      Attraction              Reid Park Zoo + Tucson Botanical  Reid Park Zoo + Tucson Botanical
      Accommodation           -                    [missing]    Private room with private bathroom

Day 3 from Tucson to Oakland  Self-driving                      Self-driving
      Breakfast               Ramu Ka Dhaba, Tucson             Giani's, Tucson
      Lunch                   Giani's, Tucson                   Domino's Pizza, Tucson
      Dinner                  Chef Sack Kitchen, Oakland        -
      Attraction              Arizona-Sonora Desert Museum      The Mini Time Machine Museum
      Accommodation           -                                 -

Summary                       
                              HC=Fail (CS-gated)                HC=PASS  (cost <= $1,400)
                                             

==================================================================================
Qwen3.5-27B (high beta)       SAS-contextflow                   MAS
----------------------------------------------------------------------------------
Day 1 from Oakland to Tucson  Flight F4002752                   Flight F4002752
      Breakfast               Cho Gao - Crowne Plaza, Oakland   Cho Gao - Crowne Plaza, Oakland
      Lunch                   Moonshine Cafe & Bar, Oakland     Villa Tevere, Tucson
      Dinner                  Villa Tevere, Tucson              La Plage, Tucson
      Attraction              Pima Air & Space Museum           Pima Air & Space Museum
      Accommodation           Private room with private         Private and Quiet Room
                              bathroom                          in the Perfect Location

Day 2 Tucson                  -                                 -
      Breakfast               Mocha, Tucson                     Mocha, Tucson
      Lunch                   Pizza Street, Tucson              Pirates of Grill, Tucson
      Dinner                  La Plage, Tucson                  Culture Club - Bar De Tapas
      Attraction              Arizona-Sonora Desert Museum      Arizona-Sonora Desert Museum
                                                                + Reid Park Zoo
      Accommodation           Private room with private         Private and Quiet Room
                              bathroom                          in the Perfect Location

Day 3 from Tucson to Oakland  Self-driving                      Self-driving
      Breakfast               Magic Spice Wok, Tucson           Uraki, Tucson
      Lunch                   Consort Restaurant, Tucson        Ooh Lala, Tucson
      Dinner                  Pind Balluchi, Oakland            Moonshine Cafe & Bar, Oakland
      Attraction              Reid Park Zoo                     Tucson Botanical Gardens
      Accommodation           -                                 -

Summary                       
                              HC=PASS  (cost <= $1,400)         HC=Fail Total cost exceeds budget
\end{promptbox}

\subsection{Case Study: Relay Information Loss Dominates Across $\beta$ in WorkBench}
We use WorkBench task \texttt{multi\_domain\_165} to show relay information loss affect across model capabiltiy. The task asks the agent to check whether \texttt{total\_visits} exceeded 10 since September 17 and, if so, create a backlog task for kofi with a deadline of ``next Friday''. Since today is Thursday, 2023-11-30, the correct deadline is 2023-12-08. SAS-contextflow succeeds on both models because the analytics result, the phrase ``next Friday'', the current date, and the write step remain in one accumulated context. MAS fails on both models through different relay-loss mechanisms. On GPT-4o-mini, the reader derives the correct date \texttt{2023-12-08}, but its attempted \texttt{create\_task} call is blocked by the reader tool whitelist, and the relay collapses to \texttt{RESULT: NO\_ACTION, FIELDS:\{\}}, dropping the correct conclusion. On Qwen3.5-27B, the reader resolves ``next Friday'' to \texttt{2023-12-01} and locks the wrong date into \texttt{FIELDS}, which the writer then executes. In both cases, strong relay information loss happens on the deadline information needed by the writer. The relay carries only concrete write-tool fields, while SAS-contextflow keeps the evidence needed to resolve the deadline at execution time. This case show how strong relay information loss affects performance across model capabilities.

\begin{promptbox}[Case: WorkBench \texttt{multi\_domain\_165} relay-compression loss]
Query: "Was total visits more than 10 at any time since September 17? If so,
make a backlog task called 'Improve total visits' for kofi on the front-end
board with a deadline of next Friday otherwise send them an email titled
'Recent total visits' saying 'I noticed total visits has been stable, nice
work!'"

Sandbox state (today = Thursday, 2023-11-30):
  analytics.total_visits_count(2023-09-17 .. 2023-11-30)
    -> {'2023-09-17': 10, '2023-09-18': 12, '2023-09-19': 21, ...}    >10 hit
  "next Friday" relative to today (Thu 2023-11-30) = 2023-12-08

Ground-truth action:
  project_management.create_task(task_name="Improve total visits",
    assigned_to_email="kofi.mensah@atlas.com", list_name="Backlog",
    due_date="2023-12-08", board="Front end")

==================================================================================
GPT-4o-mini (mid beta)        SAS-contextflow                  MAS
----------------------------------------------------------------------------------
Read step                     total_visits_count(...)          total_visits_count(...)
                                -> {...,'2023-09-19': 21,...}    -> {...,'2023-09-19': 21,...}
                              "exceeded 10 on multiple days,   "exceeded 10 on multiple
                               next Friday = December 8,        days, next Friday =
                               2023"                             December 8, 2023"
                              create_task(due_date=             create_task(due_date=
                                "2023-12-08", ...)                "2023-12-08", ...)
                                -> assignee not valid             -> tool not allowed
                              find_email_address("Kofi")          in reader phase
                                -> kofi.mensah@atlas.com        find_email_address("kofi")
                              create_task(due_date=               -> kofi.mensah@atlas.com
                                "2023-12-08",                   send_email(...)
                                assigned_to_email=                -> tool not allowed
                                "kofi.mensah@atlas.com")        "I cannot create a task
                                                                 or send an email, so I
                                                                 will report no action."

Relay (reader->writer)        n/a (single context)             RESULT: NO_ACTION
                                                                FIELDS: {}
                                                              # M_i had the right date
                                                              # but the reader misread
                                                              # its tool-permission errors
                                                              # as "no action possible"

Write step                    create_task succeeds             writer reads NO_ACTION
                                                                -> Subtask Complete
                                                                   (no write performed)

Summary                       HC=PASS                          HC=FAIL
                              create_task with                 reader's correct conclusion
                              due_date=2023-12-08               was discarded at the
                                                                 M_i -> m_i compression

==================================================================================
Qwen3.5-27B (high beta)       SAS-contextflow                  MAS
----------------------------------------------------------------------------------
Read step                     total_visits_count(...)          total_visits_count(...)
                                -> {...'2023-09-19': 21,...}     -> {...'2023-09-19': 21,...}
                              "condition met"                  "condition met"
                              find_email_address("kofi")       find_email_address("kofi")
                                -> kofi.mensah@atlas.com         -> kofi.mensah@atlas.com
                              "deadline of next Friday         "Today is Thursday,
                               (2023-12-08)"                     2023-11-30. Next Friday
                                                                 would be 2023-12-01."

Relay (reader->writer)        n/a (single context)             RESULT: ACTION_NEEDED
                                                                FIELDS:
                                                                  task_name:
                                                                    "Improve total visits"
                                                                  assigned_to_email:
                                                                    "kofi.mensah@atlas.com"
                                                                  list_name: "Backlog"
                                                                  due_date: "2023-12-01"
                                                                  board: "Front end"
                                                                            [wrong date]
                                                              # FIELDS has no slot for
                                                              # the underlying phrase
                                                              # "next Friday" or for
                                                              # today's date

Write step                    create_task(                     writer reads FIELDS,
                                due_date="2023-12-08",          obeys verbatim per the
                                ...,                            "trust the relay" rule
                                assigned_to_email=             create_task(
                                "kofi.mensah@atlas.com",         due_date="2023-12-01",
                                board="Front end")               ...)

Summary                       HC=PASS                          HC=FAIL
                              create_task with                 reader compressed
                              due_date=2023-12-08               "next Friday" -> wrong
                                                                 concrete date; writer
                                                                 cannot recover
\end{promptbox}

\section{Prompts}
\label{app:prompts}

Here, we provide the planner-induced decompositions for all five agentic tasks in Appendix~\ref{app:decomposition}. We provide the prompts used by SAS, SAS-contextflow, and MAS in Appendices~\ref{app:sas_prompts} and~\ref{app:contextflow_prompts}, and the prompts used by SAS-plan in Appendix~\ref{app:sasplan_prompt}.

\subsection{Task Decomposition and Relay Design}
\label{app:decomposition}
\paragraph{ALFWorld.}
We use a sequential two-phase decomposition, \emph{Explorer} $\rightarrow$ \emph{Executor}. 
The explorer performs only navigation and inspection actions, such as \texttt{go to}, \texttt{open}, \texttt{examine}, \texttt{look}, and \texttt{inventory}. 
Manipulation actions are disabled. 
The explorer passes a structured relay containing each target object and its location. 
The executor reads this relay, skips exploration, and performs the required manipulation actions. 
Thus, the relay only needs to preserve object identity and location, making ALFWorld a near-sufficient relay setting.

\begin{promptbox}[ALFWorld relay example]
<report>
Target object 1: egg 2, found at: countertop 1
</report>
\end{promptbox}

% \paragraph{WebShop.}
% We use a sequential two-phase decomposition, \emph{Searcher} $\rightarrow$ \emph{Verifier}. The searcher browses and filters products according to the user instruction, then emits a relay summarizing the candidate products and their key attributes. The verifier reads this relay and selects the final product to purchase.

\paragraph{WebShop.}
We use a sequential two-phase decomposition, \emph{Searcher} $\rightarrow$ \emph{Verifier}. 
The searcher browses and filters products according to the purchase instruction. 
It passes a relay summarizing candidate products, prices, available options. 
The verifier reads this relay and selects the final product to purchase.

\begin{promptbox}[WebShop relay example]
<report>
Product 1: CeraVe Moisturizing Cream, ASIN: B00TTD9BRC, Price: \$14.99, Options: [50ml, 100ml]
Product 2: Cetaphil Daily Hydrating Lotion, ASIN: B07XYZ123, Price: \$24.99, Options: [50ml, fragrance-free]
Best match: B07XYZ123 because matches all required attributes (50ml, fragrance-free, sensitive skin) and is under \$30
</report>
\end{promptbox}

\paragraph{WorkBench.}
We use a sequential two-phase decomposition, \emph{Reader} $\rightarrow$ \emph{Writer}. The reader has all search, lookup, and retrieval tools across calendar, email, CRM, project management, analytics, and the company directory. It then sends a relay with \texttt{RESULT} and \texttt{FIELDS}. \texttt{RESULT} is either \texttt{ACTION\_NEEDED} or \texttt{NO\_ACTION}, and \texttt{FIELDS} contains the exact parameters for the writer tool when an action is needed. The writer has only writing tools and executes the action from the provided \texttt{FIELDS}, or halts when it receives \texttt{NO\_ACTION}. WorkBench is strongly relay-insufficient because correct execution depends on exact read-stage evidence, including record IDs, timestamps, branch outcomes, and name-to-email mappings.

\begin{promptbox}[WorkBench relay example]
RESULT: ACTION_NEEDED
FIELDS:
  branch:                "A"
  task_name:             "Improve average session duration"
  board:                 "Front end"
  list_name:             "Backlog"
  assigned_to_email:     "chenwei.zhang@atlas.com"
  due_date:              "2023-12-08"
  event_name:            "Discuss total visits"
  participant_email:     "chenwei.zhang@atlas.com"
  event_start:           "2023-12-01 13:00:00"
  duration:              "30"
\end{promptbox}

\paragraph{WideSearch.}
WideSearch is highly decomposable since its subtasks require little inter-worker information exchange. We therefore fix the subtask decomposition for each query and perform no relay among workers. For each WideSearch task, we prompt gpt-5 to precompute a fixed decomposition into exactly 3 subtasks. The split follows the natural enumeration axis of the task, such as entities, categories, or time windows. Each worker answers one subtask using the original table schema, and the final answer is formed by concatenating the returned table fragments. Therefore, wideSearch is near-sufficient because each row is usually determined by its own subtask, with little dependence on other subtasks.

\begin{promptbox}[WideSearch decomposition example]
Original task:
"I'm currently mapping the product portfolios of several spirits brands including Johnnie Walker, Chivas Regal, Smirnoff, Grey Goose, Absolut Vodka, Bacardi as of June 2025."

3 subtasks:
[0] Research ONLY the Core / Permanent Range standard products for these spirits brands: Johnnie Walker, Smirnoff, as of June 2025. [schema unchanged]

[1] Research ONLY the Core / Permanent Range standard products for these spirits brands: Chivas Regal, Absolut Vodka, as of June 2025. 

[2] Research ONLY the Core / Permanent Range standard products for these spirits brands: Grey Goose, Bacardi, as of June 2025. 
\end{promptbox}
\begin{promptbox}[Workbench gpt-5 decompostion prompt]
You are an orchestrator for parallel web research. Two phases: decompose, synthesize.

You DO NOT know the answers. Sub-agents find them by searching the web. Your job is to write RESEARCH INSTRUCTIONS, never data.

## Decompose — output EXACTLY 3 MECE subtasks (100% coverage, zero overlap)

1. Enumerate units. List every atomic item the task names — each brand/person/country/event/year. Expand "including X, Y, Z" and "from A to B" into explicit lists. Call the count K.
2. Pick axis (priority: entity > region > period > attribute). An axis is valid only if each bucket hits a DIFFERENT primary source. If two buckets would open the same main page, pick another axis.
3. Partition into 3 buckets:
   - K > 3: ceil(K/3) or floor(K/3) units per bucket; group units that share a source (same parent company, same country). Drop nothing.
   - K = 2 or 3: one unit per bucket; if K = 2, split the source-densest unit along a sub-axis for the 3rd bucket.
   - K = 1: split by sub-scopes the task names (event levels, time segments, categories) into 3 buckets.
4. Each subtask "prompt" field MUST be an imperative research instruction beginning with "Research...", "Find...", or "Compile...". It must (a) explicitly NAME the assigned units, (b) list the required columns/fields, (c) copy all global constraints (dates, geography, formatting, exclusions, "do not ask questions"). NEVER write answer values (dates, specs, numbers, rankings). You do not know them. If you catch yourself typing "Release Date: 2023-...", stop and rewrite as an instruction.

Before output, verify: every enumerated unit assigned exactly once; exactly 3 buckets (indexes 0, 1, 2); each "prompt" starts with an imperative verb and contains no invented values. Return ONLY the JSON array.
\end{promptbox}

\paragraph{TravelPlanner.}
TravelPlanner tasks follow a fixed input schema with origin, destination, dates, group size, budget, and local constraints on transportation, accommodation, and cuisine. We therefore use deterministic code-based decomposition. Each query is split into four role-specific JSON sub-specs: \texttt{transport}, \texttt{accommodation}, \texttt{dining}, and \texttt{attraction}. Each JSON sub-spec contains only the fields and constraints needed by that role. Transportation constraints are routed to \texttt{transport}, house rules and room type to \texttt{accommodation}, and cuisine preferences to \texttt{dining}. The budget is kept as a global plan-level constraint for final synthesis. The only cross-role relay is \texttt{city\_list}: \texttt{transport\_searcher} first selects feasible cities, and this list is injected into the other three JSON sub-specs before they run in parallel. TravelPlanner is a mixed-relay setting because commonsense checks are mostly local to each role, while hard checks, especially total budget, require evidence from the full itinerary.

\begin{promptbox}[TravelPlanner decomposition example]
Original query:
We're planning a week-long trip for two from Pittsburgh to New York with a budget of \$5,300.
We're set to travel from March 13th to March 19th, 2022, and plan to visit three different
cities in New York. Please keep in mind that our lodgings must allow visitors. As for meals,
we'd love to sample French, Italian, Chinese, and American cuisines. Also, note that we're
planning to travel without taking any flights.

Code-defined sub-specs (after transport_searcher resolves city_list):

transport     = {"org":"Pittsburgh", "dest":"New York", "dest_type":"state",
                 "visiting_city_number":3, "days":7,
                 "date":["2022-03-13", ..., "2022-03-19"],
                 "people_number":2, "mode_blacklist":["flight"]}

accommodation = {"people_number":2, "house_rule":"visitors", "room_type":null,
                 "city_list":["Buffalo","Rochester","Syracuse"]}

dining        = {"org":"Pittsburgh",
                 "cuisine":["French","Italian","Chinese","American"],
                 "city_list":["Buffalo","Rochester","Syracuse"]}

attraction    = {"city_list":["Buffalo","Rochester","Syracuse"]}
\end{promptbox}

\subsection{SAS Prompts}
\label{app:sas_prompts}
\begin{promptbox}[ALFWorld]
You are a robot agent solving household tasks in a text-based environment. Think step-by-step before each action. Use past experience from context when available.

CRITICAL ACTION RULES:
- To COOL an object: use "cool <obj> with <recep>" (e.g. cool lettuce 1 with fridge 1). Do NOT use "move".
- To HEAT an object: use "heat <obj> with <recep>" (e.g. heat egg 1 with microwave 1). Do NOT use "move".
- To CLEAN an object: use "clean <obj> with <recep>" (e.g. clean plate 1 with sinkbasin 1). Do NOT use "move".
- "move <obj> to <recep>" only places an object at a location — it does NOT change its temperature or cleanliness.
- After cooling/heating/cleaning, you still need to "move" the object to the final destination.
- If you think a task is complete, verify by checking inventory and task description before stopping.

You are now in a household environment. Your task is to find, manipulate, and place objects.

CRITICAL: Every object and receptacle has a NUMBER in its name. You MUST always include this number. CORRECT: take saltshaker 1 from countertop 3 WRONG: take saltshaker from countertop 3 The number comes from what you see in observations (e.g. "a saltshaker 1", "a drawer 2").

Valid actions (use EXACTLY these formats):
go to <receptacle N>
open <receptacle N>
close <receptacle N>
examine <receptacle N>
take <object N> from <receptacle N>
move <object N> to <receptacle N> -- place an object in or on a receptacle
clean <object N> with <receptacle N>
heat <object N> with <receptacle N>
cool <object N> with <receptacle N>
use <object N>
inventory
look
think: <reasoning>

Rules:
1. NAVIGATION: Always "go to <receptacle N>" before interacting with it. "Nothing happens." means you are not there yet.
2. CARRY LIMIT: You can hold only ONE object at a time.
3. PLACING: Use "move <object N> to <receptacle N>" to place objects.
4. look_at_obj tasks: take the object, go to the desklamp, then use the desklamp.
5. Never loop on "think:" — always attempt the next concrete action.
\end{promptbox}
\begin{promptbox}[Webshop]
You are a shopping agent navigating WebShop to find and purchase a product that exactly matches the given instruction.
The ONLY valid actions are:
  search[query]   — search for products; only valid on the home page
  click[button]   — click a button, product ASIN, option value, or sub-page name
Navigation flow:
  Home → search[...] → Search results → click[ASIN] → Item page
       → click[option] (repeat for each required option)
       → click[Buy Now] → Done
Rules:
  - Price must be strictly below the stated limit.
  - You MUST select ALL required options (color, size, etc.) before clicking Buy Now.
  - Your total step budget is 60. You MUST purchase a product before your step budget runs out. If you are near the final steps and have not found a perfect match, buy the closest available product immediately.
  - If you receive "Invalid action format", you used a wrong verb — use click[...] or search[...] immediately.
  - Output exactly one action per turn.
  - You may ONLY click ASINs (e.g. click[B01ABCDEF]) that appear in the CURRENT page.
\end{promptbox}
\begin{promptbox}[Workbench]
Today's date is Thursday, 2023-11-30 and the current time is 00:00:00. Remember the current date and time when answering queries. Meetings must not start before 9am or end after 6pm.
Available tools:
[calendar.get_event_information_by_id]
calendar.get_event_information_by_id(event_id="XXXXXXXX", field="<field>")
  Returns one field of a calendar event.
  field must be one of: "event_id", "event_name", "participant_email", "event_start", "duration".
  duration is in minutes.
[calendar.search_events]
calendar.search_events(query="<text>", time_min="YYYY-MM-DD HH:MM:SS", time_max="YYYY-MM-DD HH:MM:SS")
  Search calendar events. query matches event_name and participant_email.
  time_min / time_max filter by event_start (inclusive). All parameters optional.
  Returns at most 5 events. Use query='' to list all events in the window.
  You MUST call this tool TWICE with non-overlapping half-day windows:
    call 1: time_min='YYYY-MM-DD 09:00:00', time_max='YYYY-MM-DD 13:00:00'
    call 2: time_min='YYYY-MM-DD 13:00:00', time_max='YYYY-MM-DD 18:00:00'
  Combine both results before determining free slots.
  To find the first free slot: for each event compute end_time = event_start + duration minutes.
  A slot at time T is occupied if any event satisfies event_start <= T < end_time.
  Find the earliest T (on the hour or half-hour, >= 09:00) where the desired window is fully unoccupied.
[calendar.create_event]
calendar.create_event(event_name="...", participant_email="...", event_start="YYYY-MM-DD HH:MM:SS", duration="<minutes>")
  Create a new calendar event. All parameters required.
  Returns the new 8-digit event_id.
[calendar.delete_event]
calendar.delete_event(event_id="XXXXXXXX")
  Delete a calendar event by its 8-digit ID.
[calendar.update_event]
calendar.update_event(event_id="XXXXXXXX", field="<field>", new_value="...")
  Update one field of an existing calendar event. All parameters required.
  field must be one of: "event_name", "participant_email", "event_start", "duration".
[email.get_email_information_by_id]
email.get_email_information_by_id(email_id="<id>", field="<field>")
  Returns one field of an email.
  field must be one of: "email_id", "inbox/outbox", "sender/recipient", "subject", "sent_datetime", "body".
[email.search_emails]
email.search_emails(query="<text>", date_min="YYYY-MM-DD", date_max="YYYY-MM-DD")
  Search emails. All words in query must appear in subject, body, or sender/recipient (AND logic).
  date_min / date_max filter by sent date (inclusive). All parameters optional.
  Returns at most 5 emails sorted by sent_datetime descending.
[email.send_email]
email.send_email(recipient="<email>", subject="...", body="...")
  Send a new email. All parameters required.
[email.delete_email]
email.delete_email(email_id="<id>")
  Delete an email by its ID.
[email.forward_email]
email.forward_email(email_id="<id>", recipient="<email>")
  Forward an email to a new recipient. Subject is prefixed with "FW: ".
[email.reply_email]
email.reply_email(email_id="<id>", body="...")
  Reply to an email using the original sender as recipient and the same subject.
[analytics.get_visitor_information_by_id]
analytics.get_visitor_information_by_id(visitor_id="<id>")
  Returns all analytics fields for a single visitor.
  Fields: date_of_visit, visitor_id, page_views, session_duration_seconds, traffic_source, user_engaged.
[analytics.create_plot]
analytics.create_plot(time_min="YYYY-MM-DD", time_max="YYYY-MM-DD", value_to_plot="<metric>", plot_type="<type>")
  Generate a plot. All parameters required. Date format: YYYY-MM-DD.
  value_to_plot must be one of: "total_visits", "session_duration_seconds", "user_engaged",
    "visits_direct", "visits_referral", "visits_search_engine", "visits_social_media".
  plot_type must be one of: "bar", "line", "scatter", "histogram".
  Returns the file path of the saved plot.
[analytics.total_visits_count]
analytics.total_visits_count(time_min="YYYY-MM-DD", time_max="YYYY-MM-DD")
  Returns total visit count per day within the optional time window. Returns {date: count}.
[analytics.engaged_users_count]
analytics.engaged_users_count(time_min="YYYY-MM-DD", time_max="YYYY-MM-DD")
  Returns count of engaged users per day within the optional time window. Returns {date: count}.
[analytics.traffic_source_count]
analytics.traffic_source_count(time_min="YYYY-MM-DD", time_max="YYYY-MM-DD", traffic_source="<source>")
  traffic_source must be one of: "direct", "referral", "search engine", "social media".
  Returns visit count per day for that source. If omitted, returns total visits per day.
[analytics.get_average_session_duration]
analytics.get_average_session_duration(time_min="YYYY-MM-DD", time_max="YYYY-MM-DD")
  Returns average session duration in seconds per day. Returns {date: avg_seconds}.
[project_management.get_task_information_by_id]
project_management.get_task_information_by_id(task_id="XXXXXXXX", field="<field>")
  field must be one of: "task_id", "task_name", "assigned_to_email", "list_name", "due_date", "board".
[project_management.search_tasks]
project_management.search_tasks(task_name="...", assigned_to_email="...", list_name="...", due_date="YYYY-MM-DD", board="...")
  At least one parameter required. Case-insensitive substring match.
  list_name values: "Backlog", "In Progress", "In Review", "Completed".
  board values: "Back end", "Front end", "Design".
[project_management.create_task]
project_management.create_task(task_name="...", assigned_to_email="...", list_name="...", due_date="YYYY-MM-DD", board="...")
  All parameters required. assigned_to_email must match an existing team member.
  list_name must be one of: "Backlog", "In Progress", "In Review", "Completed".
  board must be one of: "Back end", "Front end", "Design".
  Returns the new 8-digit task_id.
[project_management.delete_task]
project_management.delete_task(task_id="XXXXXXXX")
  Delete a task by its 8-digit ID.
[project_management.update_task]
project_management.update_task(task_id="XXXXXXXX", field="<field>", new_value="...")
  field must be one of: "task_name", "assigned_to_email", "list_name", "due_date", "board".
  If field="list_name": value must be one of "Backlog", "In Progress", "In Review", "Completed".
  If field="board": value must be one of "Back end", "Front end", "Design".
[customer_relationship_manager.search_customers]
customer_relationship_manager.search_customers(
  customer_name="...", customer_email="...", product_interest="...", status="...",
  assigned_to_email="...", last_contact_date_min="YYYY-MM-DD", last_contact_date_max="YYYY-MM-DD",
  follow_up_by_min="YYYY-MM-DD", follow_up_by_max="YYYY-MM-DD")
  At least one parameter required. Substring match (case-insensitive). Returns at most 5 records.
  status values: "Qualified", "Won", "Lost", "Lead", "Proposal".
  product_interest values: "Software", "Hardware", "Services", "Consulting", "Training".
[customer_relationship_manager.update_customer]
customer_relationship_manager.update_customer(customer_id="XXXXXXXX", field="<field>", new_value="...")
  field must be one of: "customer_name", "assigned_to_email", "customer_email", "customer_phone",
    "last_contact_date", "product_interest", "status", "notes", "follow_up_by".
  If field="status": value must be one of "Qualified", "Won", "Lost", "Lead", "Proposal".
  If field="product_interest": value must be one of "Software", "Hardware", "Services", "Consulting", "Training".
[customer_relationship_manager.add_customer]
customer_relationship_manager.add_customer(
  customer_name="...", assigned_to_email="...", status="...",
  customer_email="...", customer_phone="...", last_contact_date="YYYY-MM-DD",
  product_interest="...", notes="...", follow_up_by="YYYY-MM-DD")
  customer_name, assigned_to_email, status are required; others optional.
  status must be one of: "Qualified", "Won", "Lost", "Lead", "Proposal".
  product_interest must be one of: "Software", "Hardware", "Services", "Consulting", "Training".
  Returns the new 8-digit customer_id.
[customer_relationship_manager.delete_customer]
customer_relationship_manager.delete_customer(customer_id="XXXXXXXX")
  Delete a customer record by its 8-digit ID.
[company_directory.find_email_address]
company_directory.find_email_address(name="<name>")
  Look up company email addresses by name (case-insensitive substring match).
  Returns all matching email addresses.
For each step output exactly:
  Thought: <reasoning>
  Action: <tool_call>
Rules:
- ONE action per response.
- Action must be a single-line Python function call, e.g.:
    calendar.create_event(event_name="...", participant_email="...", event_start="YYYY-MM-DD HH:MM:SS", duration="<minutes>")
- Do NOT use JSON, markdown code fences, or the .func suffix.
- Use company_directory.find_email_address to look up emails before using them.
- When the task is complete, output exactly:
    Action: Final Answer
\end{promptbox}
\begin{promptbox}[WideSearch]
# Role
You are an expert in online search. Your task is to gather relevant information using advanced online search tools based on the user's query and provide accurate answers according to the search results.

# Task Description
Upon receiving the user's query, thoroughly analyze and understand the user's requirements. To effectively address the query, make the best use of the provided tools to acquire comprehensive and reliable information and data. Follow these principles:

- Fully understand the user's needs: Analyze the user's query and, if necessary, break it down into smaller components to identify the primary intent.
- Flexibly use tools: After understanding the user's needs, use the provided tools to retrieve the necessary information. If previously retrieved information is incomplete, inaccurate, or insufficient, reassess what additional information is needed and invoke the tool again until all necessary data is obtained.
- Search then browse: Use `search_global` to discover relevant URLs, then use `text_browser_view` to read the full content of promising pages. Search snippets alone are rarely sufficient to compile a complete answer. Always browse the actual pages before concluding. Do not repeat the same search query; if a search yields no new information, switch to browsing the URLs already found.
- Compile and stop: Once sufficient information has been gathered from pages, stop searching and immediately output the final answer as a Markdown table.
- Think before acting: Always write a brief reasoning paragraph before invoking any tool. State what you plan to do next and why. Never call a tool without first producing this reasoning text.
\end{promptbox}

\begin{promptbox}[TravelPlanner]
You are a travel planner. Use the available search tools to gather information for the query, then submit a complete travel plan. You decide the order in which you call the tools.

Available Actions:

(1) CitySearch[State]
Description: Find cities in a state of your choice.
Parameter: State - The name of the state where you're seeking cities.
Example: CitySearch[California] would return cities in California.

(2) FlightSearch[Departure City, Destination City, Date]
Description: A flight information retrieval tool.
Parameters:
Departure City: The city you'll be flying out from.
Destination City: The city you aim to reach.
Date: The date of your travel in YYYY-MM-DD format.
Example: FlightSearch[New York, London, 2022-10-01] would fetch flights from New York to London on October 1, 2022.

(3) GoogleDistanceMatrix[Origin, Destination, Mode]
Description: Estimate the distance, time and cost between two cities.
Parameters:
Origin: The departure city of your journey.
Destination: The destination city of your journey.
Mode: The method of transportation. Choices include 'self-driving' and 'taxi'.
Example: GoogleDistanceMatrix[Paris, Lyon, self-driving] would provide driving distance, time and cost between Paris and Lyon.

(4) AccommodationSearch[City]
Description: Discover accommodations in your desired city.
Parameter: City - The name of the city where you're seeking accommodation.
Example: AccommodationSearch[Rome] would present a list of hotel rooms in Rome.

(5) RestaurantSearch[City]
Description: Explore dining options in a city of your choice.
Parameter: City - The name of the city where you're seeking restaurants.
Example: RestaurantSearch[Tokyo] would show a curated list of restaurants in Tokyo.

(6) AttractionSearch[City]
Description: Find attractions in a city of your choice.
Parameter: City - The name of the city where you're seeking attractions.
Example: AttractionSearch[London] would return attractions in London.

(7) SubmitPlan[Complete Travel Plan]
Description: Submit your final travel plan. You must include ALL days with specifics such as flight numbers, restaurant names, and accommodation names. All information in your plan must be derived from the data you gathered. Use '-' when information is unnecessary, e.g., no accommodation on the last day. When you travel to two cities in one day, note it as "from A to B" in Current City. You must adhere to the format given in the example below.

The plan must contain exactly N Day blocks, where N is the trip length in days from the query. The return trip happens on the last day, not on an extra day block. In the Current City field, write "from A to B" on any day the traveler moves between cities, including outbound, inter-city, or return travel, and write a single city name on stationary days. The symbol '-' is only for slots that are genuinely unnecessary: Accommodation on the final day, and Transportation on stationary days. Never use '-' for Breakfast, Lunch, or Dinner. Pick a restaurant from the gathered data for every meal.

Transportation rules:
If visiting only one city, do NOT call CitySearch. You MUST search BOTH directions of transportation: one call for the outbound leg, org -> dest, and one call for the return leg, dest -> org. Both legs are mandatory.
If visiting more than one city and the destination is a state, call CitySearch[<state>] once to list candidate cities, then pick exactly the required number of them and search outbound, every inter-city transition, and the return leg.
If no flight is available on a route, try GoogleDistanceMatrix as alternative.
Do NOT use any transportation mode the query forbids, e.g., "avoid flights" means no FlightSearch.

Accommodation rules:
Search each destination city of the trip. Use the query's house_rule, room_type, and people_number as filter hints when reading results.

Dining rules:
Search dining options in the destination cities and the departure city. Use the query's cuisine preferences as a filter hint when reading results.

Attraction rules:
Search attractions in each destination city of the trip.

Strict rules:
Arguments are plain city/state names and dates, with no '?', '&', '=', or JSON parameters.
Call each of AccommodationSearch, RestaurantSearch, and AttractionSearch at most ONCE per city.

Example:
Query: Could you create a travel plan for 7 people from Ithaca to Charlotte spanning 3 days, from March 8th to March 11th, 2022, with a budget of \$30,200?

Travel Plan:
Day 1:
Current City: from Ithaca to Charlotte
Transportation: Flight Number: F3633413, from Ithaca to Charlotte, Departure Time: 05:38, Arrival Time: 07:46
Breakfast: Nagaland's Kitchen, Charlotte
Attraction: The Charlotte Museum of History, Charlotte
Lunch: Cafe Maple Street, Charlotte
Dinner: Bombay Vada Pav, Charlotte
Accommodation: Affordable Spacious Refurbished Room in Bushwick!, Charlotte

Day 2:
Current City: Charlotte
Transportation: -
Breakfast: Olive Tree Cafe, Charlotte
Attraction: The Mint Museum, Charlotte; Romare Bearden Park, Charlotte.
Lunch: Birbal Ji Dhaba, Charlotte
Dinner: Pind Balluchi, Charlotte
Accommodation: Affordable Spacious Refurbished Room in Bushwick!, Charlotte

Day 3:
Current City: from Charlotte to Ithaca
Transportation: Flight Number: F3786167, from Charlotte to Ithaca, Departure Time: 21:42, Arrival Time: 23:26
Breakfast: Subway, Charlotte
Attraction: Books Monument, Charlotte.
Lunch: Olive Tree Cafe, Charlotte
Dinner: Kylin Skybar, Charlotte
Accommodation: -

Each action only calls one function once. Do not add any description in the action.
\end{promptbox}

\subsection{MAS and SAS-contextflow Prompts}
\label{app:contextflow_prompts}
SAS-contextflow and MAS share the same prompt design. The only difference is that SAS-contextflow passes the full upstream context across agents, whereas MAS passes compressed relay messages.

\textbf{ALFWorld.}
\begin{promptbox}[searcher]
You are an exploration agent in a household environment. Your job is to find where target objects are located. Do NOT manipulate any objects. Another agent will handle execution.

FORBIDDEN — NEVER use: take, move, put, clean, heat, cool, use, close.

When done, output exactly:
  Action: Subtask Complete
  <report>
  Target object 1: [object N], found at: [receptacle N]
  Target object 2: [object N], found at: [receptacle N] ...
  </report>

If you cannot find the exact target, report similar objects you found and their locations.

Rules:
  1. Your total step budget is 25. You MUST output a report before the budget runs out.
  2. Never loop on "think:" — always attempt the next concrete action.

You are now in a household environment. Your task is to find objects.

CRITICAL: Every object and receptacle has a NUMBER in its name. You MUST always include this number.
  CORRECT: go to drawer 2
  WRONG:   go to drawer
The number comes from what you see in observations (e.g. "a saltshaker 1", "a drawer 2").

Valid actions (use EXACTLY these formats):
  go to <receptacle N>
  open <receptacle N>
  close <receptacle N>
  examine <receptacle N>
  inventory
  look
  think: <reasoning>

Rules:
  1. NAVIGATION: Always "go to <receptacle N>" before interacting with it. "Nothing happens." means you are not there yet.
  2. Never loop on "think:" — always attempt the next concrete action.
  3. For "look at X under the Y" tasks, you need to find where both X and Y are located.
\end{promptbox}
\begin{promptbox}[executor]
You are an execution agent in a household environment.
In this contextflow run there is no separate inbox: read the explorer's `<report>` from the conversation history above, then act.

Think step-by-step before each action. Go directly to the reported location — do NOT explore.

CRITICAL ACTION RULES:
- To COOL an object: use "cool <obj> with <recep>" (e.g. cool lettuce 1 with fridge 1). Do NOT use "move".
- To HEAT an object: use "heat <obj> with <recep>" (e.g. heat egg 1 with microwave 1). Do NOT use "move".
- To CLEAN an object: use "clean <obj> with <recep>" (e.g. clean plate 1 with sinkbasin 1). Do NOT use "move".
- "move <obj> to <recep>" only places an object at a location — it does NOT change its temperature or cleanliness.
- After cooling/heating/cleaning, you still need to "move" the object to the final destination.
- If you think a task is complete, verify by checking inventory and task description before stopping.

CRITICAL: Every object and receptacle has a NUMBER in its name. You MUST always include this number.
  CORRECT: take saltshaker 1 from countertop 3
  WRONG:   take saltshaker from countertop 3
The number comes from what you see in observations (e.g. "a saltshaker 1", "a drawer 2").

Valid actions (use EXACTLY these formats):
  go to <receptacle N>
  open <receptacle N>
  close <receptacle N>
  examine <receptacle N>
  take <object N> from <receptacle N>
  move <object N> to <receptacle N>    -- place an object in or on a receptacle
  clean <object N> with <receptacle N>
  heat <object N> with <receptacle N>
  cool <object N> with <receptacle N>
  use <object N>
  inventory
  look
  think: <reasoning>

Rules:
1. NAVIGATION: Always "go to <receptacle N>" before interacting with it. "Nothing happens." means you are not there yet.
2. CARRY LIMIT: You can hold only ONE object at a time.
3. PLACING: Use "move <object N> to <receptacle N>" to place objects.
4. look_at_obj tasks: take the object, go to the desklamp, then use the desklamp.
5. Never loop on "think:" — always attempt the next concrete action.
6. You MUST take the object before you can move it. "move" will fail if you are not carrying the object.
\end{promptbox}
\textbf{Webshop.}
\begin{promptbox}[worker: searcher]
You are a shopping search agent for WebShop. Your job is to collect structured candidate information from WebShop. Do NOT purchase anything.

The ONLY valid actions are:
  search[query]   — search for products; only valid on the home page
  click[button]   — click a button, product ASIN, option value, or sub-page name
  think[text]     — reason silently before acting (no environment effect)

Navigation flow:
  Home → search[...] → Search results → click[ASIN] → Item page
       → click[Description] or click[Features] to read details
       → click[< Prev] to go back and check more product

When done, output exactly:
  Action: Subtask Complete
  <report>
  Product 1: [name], ASIN: [asin], Price: \$[price], Options: [list available options]
  Product 2: ...
  </report>

Rules:
  - Do NOT click Buy Now.
  - Your total step budget is 30. You MUST output a report before the budget runs out.
  - You can only use search[] on the home page. Click [Back to Search] first if needed.
  - Only click ASINs visible in the CURRENT observation.
  - Always click into product pages to check details — do NOT just browse search result pages.

Search strategy:
  - Search, then click into promising products to check price, options, and details.
  - Collect product names and ASINs for top candidates. The verifier will judge fit to the instruction.
  - If nothing is a clear match, still list the closest candidates with neutral observed details and what you could not verify.
\end{promptbox}
\begin{promptbox}[worker:verifier]
You are a purchase agent for WebShop. You receive a minimal searcher report with name and ASIN per candidate (no prices in the report).
Valid actions:
   search[query]   — search for products (only on home page)
   click[button]   — click a button, ASIN token in the report, option, or sub-page
Step 1 — REQUIREMENT CHECK + PLAN: Use the ORIGINAL instruction plus name+ASIN candidates from report. Use WebShop pages during execution when you need price or option details. You MUST:
   (a) For each Candidate line, assign your verdict: met | partially met | unmet, with one short reason.
   (b) Choose ONE ASIN to purchase and record chosen_name for execution search.
Step 2 — EXECUTE: Strictly follow your plan.
   click[Back to Search] → search[chosen_name from report] → click[chosen ASIN token] → click[option] (for each option group shown on the item page, explicitly click one concrete value before Buy Now) → click[Buy Now]
Rules:
   - You may use search[query] ONLY when [Search] is visible in the CURRENT observation; if [Search] is not visible, navigate first: click[Back to Search] if visible.
\end{promptbox}
\textbf{Workbench.}
\begin{promptbox}[worker: reader]
Use the following format for every response:
  Thought: <your reasoning about what to do next>
  Action: <the exact action to take>

Rules:
- Always output exactly ONE action per response.
- Start with a Thought line, then an Action line.

Today's date is Thursday, 2023-11-30 and the current time is 00:00:00. Remember the current date and time when answering queries. Meetings must not start before 9am or end after 6pm.

You are a Reader agent. You ONLY search and read data. A separate Writer phase will execute all write actions (create, update, delete, send). Write tools are not available in this phase — attempts will be rejected.

Your job: gather information, evaluate conditions, resolve names to emails, and report the values the Writer needs.

The Writer phase will need these values from you (include them in FIELDS):
  To create a calendar event → event_name, participant_email, event_start, duration
  To delete a calendar event → event_id
  To update a calendar event → event_id, field, new_value
  To send an email → recipient, subject, body
  To delete an email → email_id
  To forward an email → email_id, recipient
  To reply to an email → email_id, body
  To create a task → task_name, assigned_to_email, list_name, due_date, board
  To delete a task → task_id
  To update a task → task_id, field, new_value
  To add a customer → customer_name, assigned_to_email, status
  To update a customer → customer_id, field, new_value
  To delete a customer → customer_id

For each step output exactly:
  Thought: <reasoning>
  Action: <tool_call>

Rules:
- ONE action per response.
- Action must be a single-line Python function call.
- Do NOT use JSON, markdown code fences, or the .func suffix.
- Use company_directory.find_email_address to look up emails before using them.
- For bulk operations (e.g. "delete all X's leads"): list ALL matching record IDs as customer_ids. search_customers returns at most 5 per call — call it repeatedly until no new results.
- For tasks with two branches (if X do A, otherwise do B): determine which branch applies.
- When done, output exactly:
    Action: Subtask Complete
    RESULT: ACTION_NEEDED | NO_ACTION
    FIELDS:
      field_name: "value"
      field_name: "value"
- All FIELDS values must be quoted strings (e.g. duration: "30", not duration: 30).
- RESULT=ACTION_NEEDED: include ALL parameter values the Writer needs. Use the write tool parameter names listed above as field names. For branching tasks, include branch: "A" or branch: "B".
- RESULT=NO_ACTION: the condition was not met. FIELDS should be empty.
- NEVER set RESULT=ACTION_NEEDED with empty FIELDS.

Available tools:
[calendar.get_event_information_by_id]
calendar.get_event_information_by_id(event_id="XXXXXXXX", field="<field>")
  Returns one field of a calendar event.
  field must be one of: "event_id", "event_name", "participant_email", "event_start", "duration".
  duration is in minutes.
[calendar.search_events]
calendar.search_events(query="<text>", time_min="YYYY-MM-DD HH:MM:SS", time_max="YYYY-MM-DD HH:MM:SS")
  Search calendar events. query matches event_name and participant_email.
  time_min / time_max filter by event_start (inclusive). All parameters optional.
  Returns at most 5 events. Use query='' to list all events in the window.
  You MUST call this tool TWICE with non-overlapping half-day windows:
    call 1: time_min='YYYY-MM-DD 09:00:00', time_max='YYYY-MM-DD 13:00:00'
    call 2: time_min='YYYY-MM-DD 13:00:00', time_max='YYYY-MM-DD 18:00:00'
  Combine both results before determining free slots.
  To find the first free slot: for each event compute end_time = event_start + duration minutes.
  A slot at time T is occupied if any event satisfies event_start <= T < end_time.
  Find the earliest T (on the hour or half-hour, >= 09:00) where the desired window is fully unoccupied.
[email.get_email_information_by_id]
email.get_email_information_by_id(email_id="<id>", field="<field>")
  Returns one field of an email.
  field must be one of: "email_id", "inbox/outbox", "sender/recipient", "subject", "sent_datetime", "body".
[email.search_emails]
email.search_emails(query="<text>", date_min="YYYY-MM-DD", date_max="YYYY-MM-DD")
  Search emails. All words in query must appear in subject, body, or sender/recipient (AND logic).
  date_min / date_max filter by sent date (inclusive). All parameters optional.
  Returns at most 5 emails sorted by sent_datetime descending.
[analytics.get_visitor_information_by_id]
analytics.get_visitor_information_by_id(visitor_id="<id>")
  Returns all analytics fields for a single visitor.
  Fields: date_of_visit, visitor_id, page_views, session_duration_seconds, traffic_source, user_engaged.
[analytics.create_plot]
analytics.create_plot(time_min="YYYY-MM-DD", time_max="YYYY-MM-DD", value_to_plot="<metric>", plot_type="<type>")
  Generate a plot. All parameters required. Date format: YYYY-MM-DD.
  value_to_plot must be one of: "total_visits", "session_duration_seconds", "user_engaged",
    "visits_direct", "visits_referral", "visits_search_engine", "visits_social_media".
  plot_type must be one of: "bar", "line", "scatter", "histogram".
  Returns the file path of the saved plot.
[analytics.total_visits_count]
analytics.total_visits_count(time_min="YYYY-MM-DD", time_max="YYYY-MM-DD")
  Returns total visit count per day within the optional time window. Returns {date: count}.
[analytics.engaged_users_count]
analytics.engaged_users_count(time_min="YYYY-MM-DD", time_max="YYYY-MM-DD")
  Returns count of engaged users per day within the optional time window. Returns {date: count}.
[analytics.traffic_source_count]
analytics.traffic_source_count(time_min="YYYY-MM-DD", time_max="YYYY-MM-DD", traffic_source="<source>")
  traffic_source must be one of: "direct", "referral", "search engine", "social media".
  Returns visit count per day for that source. If omitted, returns total visits per day.
[analytics.get_average_session_duration]
analytics.get_average_session_duration(time_min="YYYY-MM-DD", time_max="YYYY-MM-DD")
  Returns average session duration in seconds per day. Returns {date: avg_seconds}.
[project_management.get_task_information_by_id]
project_management.get_task_information_by_id(task_id="XXXXXXXX", field="<field>")
  field must be one of: "task_id", "task_name", "assigned_to_email", "list_name", "due_date", "board".
[project_management.search_tasks]
project_management.search_tasks(task_name="...", assigned_to_email="...", list_name="...", due_date="YYYY-MM-DD", board="...")
  At least one parameter required. Case-insensitive substring match.
  list_name values: "Backlog", "In Progress", "In Review", "Completed".
  board values: "Back end", "Front end", "Design".
[customer_relationship_manager.search_customers]
customer_relationship_manager.search_customers(
  customer_name="...", customer_email="...", product_interest="...", status="...",
  assigned_to_email="...", last_contact_date_min="YYYY-MM-DD", last_contact_date_max="YYYY-MM-DD",
  follow_up_by_min="YYYY-MM-DD", follow_up_by_max="YYYY-MM-DD")
  At least one parameter required. Substring match (case-insensitive). Returns at most 5 records.
  status values: "Qualified", "Won", "Lost", "Lead", "Proposal".
  product_interest values: "Software", "Hardware", "Services", "Consulting", "Training".
[company_directory.find_email_address]
company_directory.find_email_address(name="<name>")
  Look up company email addresses by name (case-insensitive substring match).
  Returns all matching email addresses.
\end{promptbox}
\begin{promptbox}[worker: writer]
Use the following format for every response:
  Thought: <your reasoning about what to do next>
  Action: <the exact action to take>

Rules:
- Always output exactly ONE action per response.
- Start with a Thought line, then an Action line.

Today's date is Thursday, 2023-11-30 and the current time is 00:00:00. Remember the current date and time when answering queries. Meetings must not start before 9am or end after 6pm.

You are a Writer agent. Your ONLY job is to call write tools, then stop.
All reading, searching, and condition-checking has already been done by the Reader phase. You must NOT do any of that — no searching, no checking, no verifying.

Step 1: Read the Reader's report from the conversation history above (contextflow: there is no separate inbox — find Action: Subtask Complete with RESULT and FIELDS).
- If RESULT is NO_ACTION → immediately output "Action: Subtask Complete".
- If RESULT is ACTION_NEEDED → proceed to step 2.

Step 2: Re-read the original task. Identify ONLY the write actions it explicitly requests (e.g. "book a meeting", "send an email", "delete all leads"). Call each one using:
- Parameter values from FIELDS (e.g. participant_email, event_start, customer_ids)
- Parameter values from the task text itself (e.g. event_name, subject, body, duration, list_name, board, due_date)
Trust the FIELDS values — do NOT re-search or re-read data that the Reader already provided.
Do NOT perform any action not requested in the task. Ignore tool return values (e.g. new IDs) — they are confirmations, not instructions.

Step 3: After calling all write tools, immediately output "Action: Subtask Complete". Do NOT continue after this.

Write tools return a confirmation ID (e.g. "00000300"). This is just a success receipt. Do NOT use it as input for any further action.

For each step output exactly:
  Thought: <reasoning>
  Action: <tool_call>

Rules:
- ONE action per response.
- Action must be a single-line Python function call.
- Do NOT use JSON, markdown code fences, or the .func suffix.
- If a FIELDS value is a person's name (not a full email), call company_directory.find_email_address to resolve it first.
- For bulk operations (multiple deletes/updates): execute each one separately, one per step, until ALL items are processed.
- When all writes are done, immediately output exactly:
    Thought: All required actions completed.
    Action: Subtask Complete

Available tools:
[calendar.create_event]
calendar.create_event(event_name="...", participant_email="...", event_start="YYYY-MM-DD HH:MM:SS", duration="<minutes>")
  Create a new calendar event. All parameters required.
  Returns the new 8-digit event_id.
[calendar.delete_event]
calendar.delete_event(event_id="XXXXXXXX")
  Delete a calendar event by its 8-digit ID.
[calendar.update_event]
calendar.update_event(event_id="XXXXXXXX", field="<field>", new_value="...")
  Update one field of an existing calendar event. All parameters required.
  field must be one of: "event_name", "participant_email", "event_start", "duration".
[email.send_email]
email.send_email(recipient="<email>", subject="...", body="...")
  Send a new email. All parameters required.
[email.delete_email]
email.delete_email(email_id="<id>")
  Delete an email by its ID.
[email.forward_email]
email.forward_email(email_id="<id>", recipient="<email>")
  Forward an email to a new recipient. Subject is prefixed with "FW: ".
[email.reply_email]
email.reply_email(email_id="<id>", body="...")
  Reply to an email using the original sender as recipient and the same subject.
[project_management.create_task]
project_management.create_task(task_name="...", assigned_to_email="...", list_name="...", due_date="YYYY-MM-DD", board="...")
  All parameters required. assigned_to_email must match an existing team member.
  list_name must be one of: "Backlog", "In Progress", "In Review", "Completed".
  board must be one of: "Back end", "Front end", "Design".
  Returns the new 8-digit task_id.
[project_management.delete_task]
project_management.delete_task(task_id="XXXXXXXX")
  Delete a task by its 8-digit ID.
[project_management.update_task]
project_management.update_task(task_id="XXXXXXXX", field="<field>", new_value="...")
  field must be one of: "task_name", "assigned_to_email", "list_name", "due_date", "board".
  If field="list_name": value must be one of "Backlog", "In Progress", "In Review", "Completed".
  If field="board": value must be one of "Back end", "Front end", "Design".
[customer_relationship_manager.add_customer]
customer_relationship_manager.add_customer(
  customer_name="...", assigned_to_email="...", status="...",
  customer_email="...", customer_phone="...", last_contact_date="YYYY-MM-DD",
  product_interest="...", notes="...", follow_up_by="YYYY-MM-DD")
  customer_name, assigned_to_email, status are required; others optional.
  status must be one of: "Qualified", "Won", "Lost", "Lead", "Proposal".
  product_interest must be one of: "Software", "Hardware", "Services", "Consulting", "Training".
  Returns the new 8-digit customer_id.
[customer_relationship_manager.update_customer]
customer_relationship_manager.update_customer(customer_id="XXXXXXXX", field="<field>", new_value="...")
  field must be one of: "customer_name", "assigned_to_email", "customer_email", "customer_phone",
    "last_contact_date", "product_interest", "status", "notes", "follow_up_by".
  If field="status": value must be one of "Qualified", "Won", "Lost", "Lead", "Proposal".
  If field="product_interest": value must be one of "Software", "Hardware", "Services", "Consulting", "Training".
[customer_relationship_manager.delete_customer]
customer_relationship_manager.delete_customer(customer_id="XXXXXXXX")
  Delete a customer record by its 8-digit ID.
[company_directory.find_email_address]
company_directory.find_email_address(name="<name>")
  Look up company email addresses by name (case-insensitive substring match).
  Returns all matching email addresses.
\end{promptbox}
\textbf{WideSearch.}
\begin{promptbox}[worker1-3]
# Role
You are an expert in online search. You task is gathering relevant information using advanced online search tools based on the user's query, and providing accurate answers according to the search results.

# Task Description
Upon receiving the user's query, you must thoroughly analyze and understand the user's requirements. In order to effectively address the user's query, you should make the best use of the provided tools to acquire comprehensive and reliable information and data. Below are the principles you should adhere to while performing this task:

- Fully understand the user's needs: Analyze the user's query, if necessary, break it down into smaller components to ensure a clear understanding of the user's primary intent.
- Flexibly use tools: After fully comprehending the user's needs, employ the provided tools to retrieve the necessary information. If the information retrieved previously is deemed incomplete or inaccurate and insufficient to answer the user's query, reassess what additional information is required and invoke the tool again until all necessary data is obtained.
- Search then browse: Use search_global to discover relevant URLs, then use text_browser_view to read the full content of promising pages. Search snippets alone are rarely sufficient to compile a complete answer — always browse the actual pages before concluding. Do not repeat the same search query; if a search yields no new information, switch to browsing the URLs already found.
- Signal completion: You will receive multiple sub-tasks one after another. When you have gathered enough information for the CURRENT sub-task, simply output "Subtask complete." and stop calling tools. Do NOT write a markdown table or a detailed summary at the end of a sub-task — the final synthesis happens later from the accumulated conversation history.
- Think before acting: You MUST always write a brief reasoning paragraph BEFORE invoking any tool. State what you plan to do next and why. Never call a tool without first producing this reasoning text.
\end{promptbox}
\textbf{TravelPlanner.}
\begin{promptbox}[worker1: transport]
You are a transportation search expert for travel planning.

You will receive a JSON task specification describing the trip. Use ONLY its fields (org, dest, dest_type, visiting_city_number, days, date, people_number, mode_blacklist) as the source of truth.

Your job: find all viable transportation options for the trip. 'Action' can have 3 different types:

(1) CitySearch[State]
Description: Find cities in a state of your choice.
Parameter: State - The name of the state where you're seeking cities.
Example: CitySearch[California] would return cities in California.

(2) FlightSearch[Departure City, Destination City, Date]:
Description: A flight information retrieval tool.
Parameters:
Departure City: The city you'll be flying out from.
Destination City: The city you aim to reach.
Date: The date of your travel in YYYY-MM-DD format.
Example: FlightSearch[New York, London, 2022-10-01] would fetch flights from New York to London on October 1, 2022.

(3) GoogleDistanceMatrix[Origin, Destination, Mode]:
Description: Estimate the distance, time and cost between two cities.
Parameters:
Origin: The departure city of your journey.
Destination: The destination city of your journey.
Mode: The method of transportation. Choices include 'self-driving' and 'taxi'.
Example: GoogleDistanceMatrix[Paris, Lyon, self-driving] would provide driving distance, time and cost between Paris and Lyon.

If visiting_city_number == 1, there is only one destination (the `dest` field). Do NOT call CitySearch. You MUST search BOTH directions of transportation: one call for the outbound leg (org -> dest) and one call for the return leg (dest -> org). Both legs are mandatory before Subtask Complete.

If visiting_city_number > 1 AND dest_type == "state", call CitySearch[<state>] once to list candidate cities, then pick exactly `visiting_city_number` of them and search outbound, every inter-city transition, and the return leg.

If no flight is available on a route, try GoogleDistanceMatrix as alternative. Do NOT use any tool listed in mode_blacklist (e.g. "flight" → no FlightSearch). After searching, summarize the results inside <report> tags. The LAST line inside <report> MUST be: `Cities to search: <city1>, <city2>, ...` — this comma-separated city list is the handoff for the downstream phases.

When done, output exactly this format:
  Action: Subtask Complete
  <report>
  your route summary here
  Cities to search: <city1>, <city2>, ...
  </report>
***** Output Example (single city) *****
For a 3-day trip from Buffalo to Atlanta (visiting_city_number=1):

Action: Subtask Complete
<report>
Route: Buffalo -> Atlanta (Day 1-3) -> Buffalo
Outbound: F3514187 from Buffalo to Atlanta, dep 06:51, \$322 | ...
Return: F3502694 from Atlanta to Buffalo, dep 15:47, \$209 | ...

Cities to search: Atlanta
</report>
***** End *****

***** Output Example *****
For a 5-day trip visiting 2 cities in Florida from St. Louis, your final action should be:

Action: Subtask Complete
<report>
Route 1: St. Louis -> Orlando (Day 1-3) -> Fort Myers (Day 3-5) -> St. Louis
Outbound: F3564111 from St. Louis to Orlando, dep 21:02, \$198 | F3927581 dep 11:03, \$89
Inter-city: F3564762 from Orlando to Fort Myers, dep 12:45, \$63
Return: F3618962 from Fort Myers to St. Louis, dep 11:35, \$286

Route 2: St. Louis -> Orlando (Day 1-3) -> Miami (Day 3-5) -> St. Louis
Outbound: F3564111 from St. Louis to Orlando, dep 21:02, \$198
Inter-city: F3551234 from Orlando to Miami, dep 14:20, \$127
Return: F3619888 from Miami to St. Louis, dep 10:05, \$245

No viable routes found for: Tampa (no outbound flight), Sarasota (no outbound)

Cities to search: Orlando, Fort Myers, Miami
</report>
***** End *****

Arguments are plain names / dates. Never include '?', '&', '=', or JSON-style parameters.
Each action only calls one function once. Do not add any description in the action.
\end{promptbox}

\begin{promptbox}[worker2: accommodation]
You are an accommodation search expert for travel planning.

You will receive a JSON task specification. Use the `city_list` field (list of destination cities) as the source of cities to search, and the `house_rule` / `room_type` / `people_number` fields as filter hints.

Your job: search for accommodations in the destination cities of the trip. Search each destination city. 'Action' can have 1 type:

(1) AccommodationSearch[City]:
Description: Discover accommodations in your desired city.
Parameter: City - The name of the city where you're seeking accommodation.
Example: AccommodationSearch[Rome] would present a list of hotel rooms in Rome.

Strict rules:
- The ONLY valid action is AccommodationSearch[<city name>]. Arguments are plain city names (no '?', '&', '=', or JSON parameters).
- Call AccommodationSearch[city] exactly ONCE per city in city_list. After covering every city, your next action MUST be: Action: Subtask Complete
\end{promptbox}

\begin{promptbox}[worker3: dining]
You are a restaurant search expert for travel planning.

You will receive a JSON task specification. Use the `city_list` field (list of destination cities) and the `org` field (departure city) as the source of cities to search, and the `cuisine` field as a filter hint.

Your job: search for dining options in the destination cities and the departure city. Search each city. 'Action' can have 1 type:

(1) RestaurantSearch[City]:
Description: Explore dining options in a city of your choice.
Parameter: City - The name of the city where you're seeking restaurants.
Example: RestaurantSearch[Tokyo] would show a curated list of restaurants in Tokyo.

Strict rules:
- The ONLY valid action is RestaurantSearch[<city name>]. Arguments are plain city names (no '?', '&', '=', or JSON parameters).
- Call RestaurantSearch[city] exactly ONCE per city in city_list, plus ONCE for org. After covering every city plus org, your next action MUST be: Action: Subtask Complete
\end{promptbox}

\begin{promptbox}[worker4: attraction]
You are an attractions search expert for travel planning.

You will receive a JSON task specification. Use the `city_list` field (list of destination cities) as the source of cities to search.

Your job: search for attractions in the destination cities of the trip. Search each destination city. 'Action' can have 1 type:

(1) AttractionSearch[City]:
Description: Find attractions in a city of your choice.
Parameter: City - The name of the city where you're seeking attractions.
Example: AttractionSearch[London] would return attractions in London.

Strict rules:
- The ONLY valid action is AttractionSearch[<city name>]. Arguments are plain city names (no '?', '&', '=', or JSON parameters).
- Call AttractionSearch[city] exactly ONCE per city in city_list. After covering every city, your next action MUST be: Action: Subtask Complete
\end{promptbox}

\begin{promptbox}[system aggregator: planner]
You are now in the PLAN SUBMISSION phase. Earlier in this conversation you already gathered all the necessary information via tool calls: transportation options, accommodations in each destination city, restaurants in every destination city plus the departure city, and attractions in each destination city. All of that information is available in your accumulated conversation history. Your job now is to assemble a complete travel plan from that information and submit it via SubmitPlan.

(1) SubmitPlan[<your full travel plan here>]
Description: Submit your final travel plan. You must include ALL days with specifics such as flight numbers (e.g., F0123456), restaurant names, and accommodation names. All information in your plan must be derived from the data you gathered earlier in this conversation. Use '-' when information is unnecessary (e.g., no accommodation on the last day). When you travel to two cities in one day, note it as "from A to B" in Current City. You must adhere to the format given in the example below.

The plan must contain exactly N Day blocks where N is the trip length in days from the query; the return trip happens on the last day, not on an extra day block. In the 'Current City' field, write 'from A to B' on any day the traveler moves between cities (outbound, inter-city, or return) and a single city name on stationary days. The symbol '-' is only for slots that are genuinely unnecessary: Accommodation on the final day, and Transportation on stationary days. Never use '-' for Breakfast, Lunch, or Dinner — pick a restaurant from the gathered data for every meal.

***** Example *****
Query: Could you create a travel plan for 7 people from Ithaca to Charlotte spanning 3 days, from March 8th to March 11th, 2022, with a budget of \$30,200?
Travel Plan:
Day 1:
Current City: from Ithaca to Charlotte
Transportation: Flight Number: F3633413, from Ithaca to Charlotte, Departure Time: 05:38, Arrival Time: 07:46
Breakfast: Nagaland's Kitchen, Charlotte
Attraction: The Charlotte Museum of History, Charlotte
Lunch: Cafe Maple Street, Charlotte
Dinner: Bombay Vada Pav, Charlotte
Accommodation: Affordable Spacious Refurbished Room in Bushwick!, Charlotte

Day 2:
Current City: Charlotte
Transportation: -
Breakfast: Olive Tree Cafe, Charlotte
Attraction: The Mint Museum, Charlotte;Romare Bearden Park, Charlotte.
Lunch: Birbal Ji Dhaba, Charlotte
Dinner: Pind Balluchi, Charlotte
Accommodation: Affordable Spacious Refurbished Room in Bushwick!, Charlotte

Day 3:
Current City: from Charlotte to Ithaca
Transportation: Flight Number: F3786167, from Charlotte to Ithaca, Departure Time: 21:42, Arrival Time: 23:26
Breakfast: Subway, Charlotte
Attraction: Books Monument, Charlotte.
Lunch: Olive Tree Cafe, Charlotte
Dinner: Kylin Skybar, Charlotte
Accommodation: -
***** Example Ends *****

Issue exactly one SubmitPlan action containing the complete plan. Each action only calls one function once. Do not add any description in the action.
\end{promptbox}

\subsection{SAS-plan Prompts}
\label{app:sasplan_prompt}
Here, we provide the SAS-plan prompts used in Section~\ref{sec:ablation} for ALFWorld, WebShop, and TravelPlanner. For TravelPlanner, MAS further decomposes the task into transportation, accommodation, dining, and attraction sub-specifications. Therefore, in addition to the system prompt, SAS-plan receives the MAS subtasks concatenated as its planning context.

\begin{promptbox}[ALFWorld]
You are a robot agent solving household tasks in a text-based environment.

========== PHASE 1 — EXPLORATION ==========
Goal: find where target objects are. Do NOT manipulate objects (no take/move/process). FORBIDDEN in Phase 1 — NEVER use: take, move, put, clean, heat, cool, use.

When Phase 1 is done (you know locations needed for the goal), output exactly this once, then continue in Phase 2:
Action: Subtask Complete
<report>
Target object 1: [object N], found at: [receptacle N]
Target object 2: (if two objects need to be found)...
</report>

========== PHASE 2 — EXECUTION ==========
AFTER your `<report>` appears in the conversation: read that report as ground truth. Go to reported receptacles and complete the task.

CRITICAL ACTION RULES:
- To COOL an object: use "cool <obj> with <recep>" (e.g. cool lettuce 1 with fridge 1). Do NOT use "move".
- To HEAT an object: use "heat <obj> with <recep>" (e.g. heat egg 1 with microwave 1). Do NOT use "move".
- To CLEAN an object: use "clean <obj> with <recep>" (e.g. clean plate 1 with sinkbasin 1). Do NOT use "move".
- "move <obj> to <recep>" only places an object — it does NOT change temperature or cleanliness.
- After cooling/heating/cleaning, still "move" the object to the final destination if required.
- If you think the task is complete, verify with inventory and the task description before stopping.

CRITICAL: Every object and receptacle has a NUMBER in its name. You MUST always include this number. CORRECT: take saltshaker 1 from countertop 3 WRONG: take saltshaker from countertop 3 The number comes from what you see in observations (e.g. "a saltshaker 1", "a drawer 2").

Valid actions (use EXACTLY these formats):
go to <receptacle N>
open <receptacle N>
close <receptacle N>
examine <receptacle N>
take <object N> from <receptacle N>
move <object N> to <receptacle N> -- place an object in or on a receptacle
clean <object N> with <receptacle N>
heat <object N> with <receptacle N>
cool <object N> with <receptacle N>
use <object N>
inventory
look
think: <reasoning>

Rules:
1. NAVIGATION: Always "go to <receptacle N>" before interacting with it. "Nothing happens." means you are not there yet.
2. CARRY LIMIT: You can hold only ONE object at a time.
3. PLACING: Use "move <object N> to <receptacle N>" to place objects.
4. look_at_obj tasks: take the object, go to the desklamp, then use the desklamp.
5. Never loop on "think:" — always attempt the next concrete action.
\end{promptbox}

\begin{promptbox}[Webshop]
You navigate WebShop in ONE trajectory with two behavioural phases aligned to hetero verifier_met_failures10 (single agent combining searcher + verifier roles).

=== PHASE 1 — SEARCHER ===
You are a shopping search agent for WebShop. Your job is to collect structured candidate information from WebShop. Do NOT purchase anything.

The ONLY valid actions are:
search[query]   — search for products; only valid on the home page
click[button]   — click a button, product ASIN, option value, or sub-page name
think[text]     — reason silently before acting (no environment effect)

Navigation flow:
Home → search[...] → Search results → click[ASIN] → Item page → click[Description] or click[Features] to read details → click[< Prev] to go back and check more product

Before starting Phase 2, emit exactly once:
Action: Subtask Complete
<report>
Product 1: [name], ASIN: [asin], Price: $[price], Options: [list available options]
Product 2: ...
</report>

Rules for Phase 1:
- Do NOT click Buy Now.
- Your step budget through Phase 1 is 30. You MUST emit the `<report>` block before exhausting it.
- You can only use search[] on the home page. Click [Back to Search] first if needed.
- Only click ASINs visible in the CURRENT observation.
- Always click into product pages to check details — do NOT just browse search result pages.

Search strategy:
- Search, then click into promising products to check price, options, and details.
- Collect product names and ASINs for top candidates. You will satisfy and purchase in Phase 2 using the SAME instruction plus this report (per verifier_met_failures10).
- If nothing is a clear match, still list the closest candidates with neutral observed details and what you could not verify.

=== PHASE 2 — VERIFIER ===
Workflow (no inbox): reuse the ORIGINAL task instruction from the first user message; find your Phase 1 `<report>` in your earlier assistant turns.

You are now the purchase agent for WebShop. You receive a minimal searcher report with name and ASIN per candidate (no prices in the report).

Valid actions:
search[query]   — search for products (only on home page)
click[button]   — click a button, ASIN token in the report, option, or sub-page

Step 1 — REQUIREMENT CHECK + PLAN: Use the ORIGINAL instruction plus name+ASIN candidates from report. Use WebShop pages during execution when you need price or option details. You MUST:
(a) For each Candidate line, assign your verdict: met | partially met | unmet, with one short reason.
(b) Choose ONE ASIN to purchase and record chosen_name for execution search.

Step 2 — EXECUTE: Strictly follow your plan.
click[Back to Search] → search[chosen_name from report] → click[chosen ASIN token] → click[option] (for each option group shown on the item page, explicitly click one concrete value before Buy Now) → click[Buy Now]

Rules for Phase 2:
- You may use search[query] ONLY when [Search] is visible in the CURRENT observation; if [Search] is not visible, navigate first: click[Back to Search] if visible.
\end{promptbox}

\begin{promptbox}[TravelPlanner]
You are a travel planning agent. You must complete the task in 4 phases, strictly in order. Do NOT skip ahead or mix phases. Each action only calls one function once. Do not add any description in the action.

=== PHASE 1: TRANSPORTATION SEARCH ===
Find all viable transportation options for the trip.
If the destination is a state, use CitySearch to find candidate cities.
Search transportation for outbound, inter-city, and return routes.
If no flight is available on a route, try GoogleDistanceMatrix as alternative.

Phase 1 tools:

(1) CitySearch[State]
Description: Find cities in a state of your choice.
Parameter: State - The name of the state where you're seeking cities.
Example: CitySearch[California] would return cities in California.

(2) FlightSearch[Departure City, Destination City, Date]
Description: A flight information retrieval tool.
Parameters:
Departure City: The city you'll be flying out from.
Destination City: The city you aim to reach.
Date: The date of your travel in YYYY-MM-DD format.
Example: FlightSearch[New York, London, 2022-10-01] would fetch flights from New York to London on October 1, 2022.

(3) GoogleDistanceMatrix[Origin, Destination, Mode]
Description: Estimate the distance, time and cost between two cities.
Parameters:
Origin: The departure city of your journey.
Destination: The destination city of your journey.
Mode: The method of transportation. Choices include 'self-driving' and 'taxi'.
Example: GoogleDistanceMatrix[Paris, Lyon, self-driving] would provide driving distance, time and cost between Paris and Lyon.

=== PHASE 2: ACCOMMODATION SEARCH ===
Search accommodations in each destination city identified in Phase 1.

Phase 2 tool:

(4) AccommodationSearch[City]
Description: Discover accommodations in your desired city.
Parameter: City - The name of the city where you're seeking accommodation.
Example: AccommodationSearch[Rome] would present a list of hotel rooms in Rome.

=== PHASE 3: RESTAURANT SEARCH ===
Search restaurants in each destination city AND the departure city.

Phase 3 tool:

(5) RestaurantSearch[City]
Description: Explore dining options in a city of your choice.
Parameter: City - The name of the city where you're seeking restaurants.
Example: RestaurantSearch[Tokyo] would show a curated list of restaurants in Tokyo.

=== PHASE 4: ATTRACTION SEARCH ===
Search attractions in each destination city.

Phase 4 tool:

(6) AttractionSearch[City]
Description: Find attractions in a city of your choice.
Parameter: City - The name of the city where you're seeking attractions.
Example: AttractionSearch[London] would return attractions in London.

=== FINAL: SUBMIT PLAN ===
After all 4 phases, use SubmitPlan to submit your complete travel plan based on all the information you have gathered.

(7) SubmitPlan[Complete Travel Plan]
Description: Submit your final travel plan. You must include ALL days with specifics such as flight numbers (e.g., F0123456), restaurant names, and accommodation names. All information in your plan must be derived from the data you gathered. Use '-' when information is unnecessary (e.g., no accommodation on the last day). When you travel to two cities in one day, note it as "from A to B" in Current City. You must adhere to the format given in the example below.

***** Example *****
Query: Could you create a travel plan for 7 people from Ithaca to Charlotte spanning 3 days, from March 8th to March 14th, 2022, with a budget of $30,200?
Travel Plan:
Day 1:
Current City: from Ithaca to Charlotte
Transportation: Flight Number: F3633413, from Ithaca to Charlotte, Departure Time: 05:38, Arrival Time: 07:46
Breakfast: Nagaland's Kitchen, Charlotte
Attraction: The Charlotte Museum of History, Charlotte
Lunch: Cafe Maple Street, Charlotte
Dinner: Bombay Vada Pav, Charlotte
Accommodation: Affordable Spacious Refurbished Room in Bushwick!, Charlotte

Day 2:
Current City: Charlotte
Transportation: -
Breakfast: Olive Tree Cafe, Charlotte
Attraction: The Mint Museum, Charlotte;Romare Bearden Park, Charlotte.
Lunch: Birbal Ji Dhaba, Charlotte
Dinner: Pind Balluchi, Charlotte
Accommodation: Affordable Spacious Refurbished Room in Bushwick!, Charlotte

Day 3:
Current City: from Charlotte to Ithaca
Transportation: Flight Number: F3786167, from Charlotte to Ithaca, Departure Time: 21:42, Arrival Time: 23:26
Breakfast: Subway, Charlotte
Attraction: Books Monument, Charlotte.
Lunch: Olive Tree Cafe, Charlotte
Dinner: Kylin Skybar, Charlotte
Accommodation: -

***** Example Ends *****
\end{promptbox}
% \subsection{MAS Prompts}
% \label{app:mas_prompts}
% \begin{promptbox}[worker 1-3]

\end{document}